\DeclareMathOperator{\argmin}{argmin}
\newtheorem{theorem}{Theorem}
\newtheorem{lemma}{Lemma}
\title{Understanding and Improving Optimization in Predictive Coding Networks}
\author{%
  Nick Alonso\thanks{Author Contact: nalonso2@uci.edu} \\
  Cognitive Sciences Dept.\\
  University of California, Irvine\\
  \And
  Jeff Krichmar \\
  Cognitive Sciences Dept.\\
  Computer Science Dept.\\
  University of California, Irvine\\
  \And
  Emre Neftci \\
  Peter Gr\"unberg Institute, Forschungszentrum J\"ulich, Germany \\
  Electrical Engineering and Information Technology, RWTH Aachen, Germany
}
\begin{document}

\maketitle

\begin{abstract}
Backpropagation (BP), the standard learning algorithm for artificial neural networks, is often considered biologically implausible. In contrast, the standard learning algorithm for predictive coding (PC) models in neuroscience, known as the inference learning algorithm (IL), is a promising, bio-plausible alternative. However, several challenges and questions hinder IL's application to real-world problems. For example, IL is computationally demanding, and without memory-intensive optimizers like Adam, IL may converge to poor local minima. Moreover, although IL can reduce loss more quickly than BP, the reasons for these speedups or their robustness remains unclear. In this paper, we tackle these challenges by 1) altering the standard implementation of PC circuits to substantially reduce computation, 2) developing a novel optimizer that improves the convergence of IL without increasing memory usage, and 3) establishing theoretical results that help elucidate the conditions under which IL is sensitive to second and higher-order information.
\end{abstract}

\section{Introduction}
Artificial neural networks (ANNs) were originally created to mimic biological neural circuits \cite{mcculloch1943logical}. However, backpropagation (BP) \cite{rumelhart1995backpropagation}, the now standard algorithm used to train ANNs, is difficult to reconcile with neurobiology \cite{crick1989recent, lillicrap2020backpropagation}. Researchers in computational neuroscience and neuromorphic computing have attempted to alter BP to better fit what is known about the brain \cite{lillicrap2016random, liao2016important, guerguiev2017towards, richards2019dendritic}. However, these altered versions of BP typically either fail to overcome the biological implausibilities of BP, such as BP's segregated feed-back signals that do not alter feed-forward activities, or perform significantly worse than standard BP. One promising alternative approach is based on the predictive coding (PC) model in neuroscience \cite{rao1999predictive}. PC is a type of recurrent neural network typically trained with inference learning (IL). IL is a variant of generalized expectation maximization \cite{millidge2022theoretical}, which works by first performing inference, where an objective known as free energy is reduced w.r.t. neuron activities via the recurrent PC circuits. Then weights are updated to further reduce free energy. Unlike BP, IL does not require a segregated feedback stream. Instead, its recurrent circuitry requires feed-forward and feedback signals to interact with each other. Further, like the brain and in contrast to BP, IL performs Hebbian-like updates that are local in space and time \cite{whittington2017approximation}.

Previous work has found that networks trained with IL performed competitively with BP on classification and self-supervised tasks that use small and medium sized images \cite{whittington2017approximation, alonso2021tightening, salvatori2021associative, salvatori2022learning, alonso2022theoretical, song2022inferring}. Recent work has even found some noticeable performance advantages over BP/SGD (e.g., \cite{alonso2022theoretical, song2022inferring}), such as faster loss reduction and convergence with small mini-batches. These results suggest that IL may be useful in certain machine learning and neuromorphic computing applications, such as online learning scenarios which require small mini-batches. However, there are several challenges preventing IL from easily being applied to engineering problems: \textbf{1)} IL is much more computationally expensive than BP due primarily to IL's expensive inference phase. \textbf{2)} IL has only achieved comparable performance to BP when memory expensive optimizers like Adam are used. Without any optimizer, IL convergence is prone to fall in poor local minima \cite{alonso2022theoretical}, and the reason why this happens is not well understood. \textbf{3)} IL sometimes shows speedups in loss reduction and convergence over BP \cite{alonso2022theoretical, song2022inferring}, but the robustness of these speedups across models and tasks is unknown. In this paper, we tackle these challenges through the following contributions:
\begin{enumerate}
\item We develop and test a non-standard implementation of IL's inference phase, called sequential inference, which propagates errors more quickly through the network allowing for a significant reduction of the computational cost of IL.
\item We show that IL weight updates are drastically smaller in magnitude in early layers, which may lead the network to get caught in poor local minima near the initial parameters. We develop a custom optimizer called \textbf{M}atrix Update E\textbf{q}ualization (MQ) to address this problem. MQ requires no significant increase in memory or computation compared to IL and our simulations provide evidence that MQ prevents the convergence issues of IL at least as well as Adam.
\item We provide simulation results showing IL often reduces the loss more quickly than BP/SGD. We establish new theoretical results showing that IL is generally sensitive to second and higher-order information, which could explain this speed up.
\end{enumerate}
In sum, in our simulations below, IL with sequential inference and the MQ optimizer requires no more memory than BP/SGD, requires only slightly more computation than BP/SGD, converges to similar losses as BP/SGD, and is sensitive to higher-order information, which often leads to faster convergence than BP/SGD. \textit{As far as we know, this is the first time an energy-based learning algorithm (as defined in \cite{scellier2017equilibrium, whittington2019theories}) has performed as well or better than BP/SGD on natural images in the ways described above without using memory expensive optimizers and without requiring significantly more computation than BP.} This combined with the fact that IL is more bio-plausible than BP suggests IL, with the modifications made here, may be a promising learning algorithm for neuromorphic and bio-inspired machine learning communities.

\section{Background and Notation}

\subsection{Notation}
{\small
\begin{table}[h]
\centering
  \begin{tabular}{c l}
    \toprule
    Term & Description \\
     \toprule
    $W_l$ & Weight Matrix, pre-synaptic layer $l$\\
    $h_l$ & Feedforward Activity layer $l$, $h_l = W_{l-1} f(h_{l-1})$\\
    $\hat{h}_l$ & Optimized/Target Activity layer $l$\\
    $p_l$ & Local Prediction layer $l$, $p_l = W_{l-1}f( \hat{h}_{l-1})$\\
    $e_l$ & Local Error, layer $l$, $e_l = \hat{h}_l - p_l$\\
    \bottomrule
\end{tabular}
\caption{Notation}
\label{tab:notation}
\vspace{-10pt}
\end{table}}
Notation describing a multi-layered feed-forward (FF) network (MLP) is summarized in table \ref{tab:notation}. We assume a bias is stored in an extra column of each weight matrix $W_l$.

\subsection{Predictive Coding and Inference Learning}
Predictive coding (PC) networks are a kind of recurrent neural network typically trained with the inference learning algorithm (IL). IL can be interpreted as a variant of generalized Expectation Maximization algorithm (EM) \cite{friston2008hierarchical, millidge2022theoretical}. Like EM, IL proceeds in two steps: First, an energy function, known as free energy is minimized w.r.t. neuron activities. PC computations perform this minimization using gradient descent. Then weights are updated to further reduce free energy. 
Below, we present the energy function and equations for PC and IL weight updates.

\textbf{Free Energy:} The free energy, $F$, is defined here as follows:
\begin{equation}\label{eq:FEnergy}
    F = \mathcal{L}(y, \hat{h}_L) + \sum_{l=1}^{L} \gamma_l \frac12 \Vert \hat{h}_l - W_{l-1}f(\hat{h}_{l-1}) \Vert^2 + \sum_{l=1}^{L-1} \gamma^{decay}_l \frac{1}{2} \Vert f(\hat{h}_l) \Vert^2,
\end{equation}
where $\hat{h}$ are layer activities, $\mathcal{L}$ is the global loss, $\frac{1}{2} \Vert f(\hat{h}_l) \Vert^2$ is an optional regularization term, $f$ is a non-linearity, and $\gamma$ are positive scalar weighting terms. The middle term can be interpreted as a summation over squared prediction errors, where prediction $p_l = W_{l-1}f(\hat{h}_{l-1})$ and prediction errors are $e_l = \hat{h}_l - p_l$. If we set the activities at the output layer equal to the prediction target, $y = \hat{h}_L$ and ignore the decay (set $\gamma^{decay} = 0$), as is common in practice, the energy is just the sum of prediction errors: $F = \sum_{l=1}^{L} \gamma_l \frac12 \Vert e_l \Vert^2$.

\textbf{Inference:} During the inference phase, which is similar to the E-step of EM \cite{millidge2022theoretical}, neuron activities $\hat{h}$ initialized to feed-forward activities $h$ then are updated to reduce $F$. PC refers to the the process that updates activities iteratively using partial gradients of local prediction errors. For example, PC circuits update $\hat{h}_l$ using gradients from $e_l$ and $e_{l+1}$. Specifically, a single gradient update over a hidden layer $\hat{h}_l$ is
\begin{equation}\label{eq:actUpdate}
\Delta \hat{h}_l = \epsilon(-\frac{\partial F}{\partial \hat{h}_l}) = \epsilon( \gamma_{l+1} f'(\hat{h}_l) W_l^T e_{l+1} - \gamma_l e_l - \gamma^{decay}_l f'(\hat{h}_l) \hat{h}_l),
\end{equation}
where $\epsilon$ is the step size. The output layer activities, $\hat{h}_L$, are either fully clamped to output targets ($\hat{h}_L = y$) or are softly clamped such that each iteration $\hat{h}_L$ is updated to reduce the loss $\mathcal{L}(\hat{h}_L,y)$ and $\gamma_L e_L$. Here we assume $\mathcal{L}(\hat{h}_L,y) = \frac12\Vert y - p_L \Vert^2$, which yields a closed form solution that is computed at each iteration:
\begin{equation}\label{eq:outUpdate}
\hat{h}_L = \frac{1}{(1 + \gamma_L)} y + \frac{\gamma_L}{(1 + \gamma_L)} p_L.
\end{equation}
The derivation can be found in appendix \ref{app:derivOut}. In practice, gradient updates are typically performed for 15+ iterations (e.g., \cite{whittington2017approximation, alonso2021tightening, salvatori2022learning}).  

\textbf{Learning:} After the inference phase is completed, weights are updated to further reduce $F$. This update is typically performed by using the gradient of the local errors, e.g., $W_l$ is updated with the gradient of $\frac12\Vert e_{l+1}\Vert^2$:
\begin{equation}\label{eq:LMSUpdate}
\Delta W_l = -\alpha_l \frac{\partial F}{\partial W_l} = \alpha_l e_{l+1}f(\hat{h}_l)^T,
\end{equation}
where $e_{l+1} = \hat{h}_{l+1} - W_lf(\hat{h}_l)$ and $\alpha_l$ is a layer-wise step size. 

\subsection{Inference Learning Approximates Implicit Gradient Descent}
Previous works have analyzed the similarities between back-propagation (BP) and stochastic gradient descent (SGD) and IL (e.g., \cite{whittington2017approximation}). There, it was found that IL approaches BP/SGD as activities $\hat{h}$ approach feed forward activity values $h$ \cite{whittington2017approximation}. Although insightful, such analyses leave open the question of why IL is able to reduce the loss in a stable manner even when the $\hat{h}$ deviates significantly from $h$, which often occurs in practice \cite{rosenbaum2022relationship}. Recently, an alternative theoretical framework for IL has been established that avoids this issue. In particular, it was shown that, in the case of a mini-batch size of one and under specific conditions on the $\gamma$ terms in the energy equation and learning rates $\alpha_l$, IL is equivalent to \textit{implicit} stochastic gradient descent. These conditions on the $\gamma$ and $\alpha$ terms are non-standard but are approximated well in standard implementations (see \cite{alonso2022theoretical} and below). We emphasize that implicit SGD is not equivalent to standard SGD, which we call explicit SGD here. The difference between the two can stated as follows:
\begin{equation}
\begin{split}
\textbf{Explicit SGD: } \theta^{(b+1)} &= \theta^{(b)} - \beta \frac{\partial \mathcal{L}(\theta^{(b)})}{\partial \theta^{(b)}}\\
\textbf{Implicit SGD: } \theta^{(b+1)} &= \theta^{(b)} - \beta \frac{\partial \mathcal{L}(\theta^{(b+1)})}{\partial \theta^{(b+1)}},
\end{split}
\end{equation}
where $\theta$ are the model parameters, $b$ is the current training iteration, and $\beta$ is a `global' learning rate acting over all parameters. While standard/explicit SGD updates parameters with the gradient of the loss w.r.t. the current parameters, implicit SGD updates parameters with the gradient of the loss w.r.t. the parameters at the next training iteration, $b+1$. This gradient cannot be explicitly computed given known values at iteration $b$. Hence, this update is computed implicitly. It can be shown that the implicit SGD update is equivalent to the \emph{proximal update} \cite{parikh2014proximal, toulis2014implicit}:
\begin{equation}\label{eq:prox}
\theta^{(b+1)} = \argmin_{\theta} \mathcal{L}(\theta) + \frac{1}{2\beta} \Vert \theta - \theta^{(b)} \Vert^2 = \theta^{(b)} - \beta \frac{\partial \mathcal{L}(\theta^{(b+1)})}{\partial \theta^{(b+1)}}.
\end{equation}
The proximal algorithm sets the new parameters equal to the output of an optimization process that both minimizes the global loss $\mathcal{L}$ and the squared difference between the current and optimized parameters, which keeps the new parameters in the proximity of the old ones. The intuition for why IL approximates the proximal algorithm goes as follows: IL reduces $F$ w.r.t. to neuron activities before updating weights, which does two things: 1) $\hat{h}_L$ is updated to reduce the loss at the output layer (equation \ref{eq:outUpdate}), which means that when weights are updated the loss is reduced given the same input. 2) Local errors $e_l$ and activity $f(\hat{h}_l)$ magnitudes are minimized, which has the effect of minimizing the magnitude of the weight update since the weight updates are just outer products $e_{l+1}$ and $f(\hat{h}_l)$. Thus, IL's inference phase yields weight updates that both minimize loss and the update norm, just like the proximal update.

\section{Theoretical Results}

\subsection{IL with MQ and Implicit SGD}
\begin{wrapfigure}{R}{0.38\textwidth}
\centering
\includegraphics[width=0.35\textwidth]{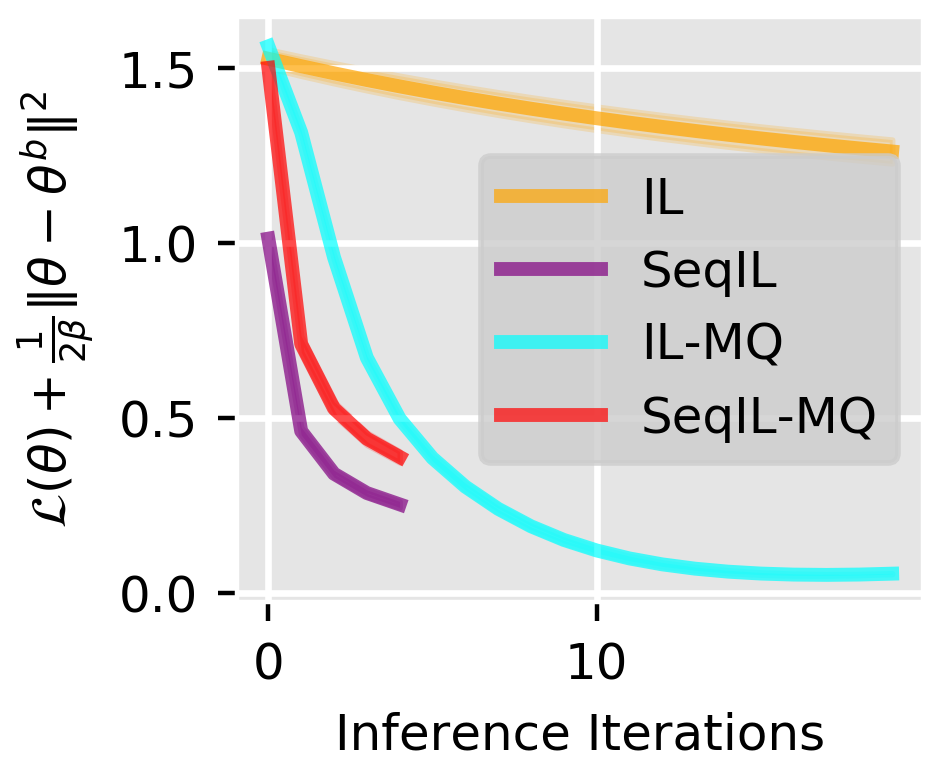}
\caption{Measurement of the proximal objective ($\beta=100$) during inference phase with fully clamped output layer of standard IL (IL) (20 iterations) and sequential IL (SeqIL) (5 iterations) with a fixed learning rate or MQ adaptive learning rates. Consistent with theorem \ref{thrm:impSGD=IL}, reducing energy F during the inference phase leads to a reduction of the proximal objective.}\label{fig:proxPart}
\vspace{-15pt}
\end{wrapfigure}
The IL algorithm was shown to be equivalent to the proximal algorithm/implicit SGD under a set of assumptions including that each weight update used a normalized step size: $\alpha_l = \Vert \hat{h}_l \Vert^{-2}$. This normalized step size solves the local prediction problem, such that local errors are minimized to zero given the same input \cite{alonso2022theoretical} with mini-batch size of 1. The original PC/IL algorithm \cite{rao1999predictive}, however, does not use this normalized step size and instead treats $\alpha_l$ as a static hyper-parameter. Further, below we develop an optimizer that uses static step sizes for each layer. This raises the question of how this class of IL algorithms relate to the proximal algorithm. We show that even when a non-normalized step size is used to update weights, IL approximates implicit SGD under certain settings of the $\gamma$ terms in the free energy (equation \ref{eq:FEnergy}) and in a certain limit concerning how much weight updates reduce local errors. In the next sections, we use the implicit SGD interpretation to further describe how IL is distinct from BP/SGD. 

Let a `static' scalar step size, $\alpha_l$, refer to a scalar that does not change during the inference phase with changes to $\hat{h}_l$ (unlike the normalized step size). Let $\theta_{IL}^{(b+1)}$ be the parameters updated  at iteration $b$ by the IL algorithm with static learning rates. Let $\theta_{prox}^{(b+1)}$ be the parameters updated by the proximal algorithm (equation \ref{eq:prox}). The next theorem states that the IL algorithm with static learning rates is equivalent to the proximal algorithm under specific settings of the $\gamma$ variables.
\begin{theorem}\label{thrm:impSGD=IL}
Consider the IL algorithm at training iteration $b$ with static $\alpha_l$ and mini-batch size one. Assume that at each inference iteration we update the $\gamma$ terms at hidden layers according to $\gamma_l = \alpha^2_{l-1} \Vert f(\hat{h}_{l-1}) \Vert^2$ and $\gamma^{decay}_l = \alpha_{l}^2 \Vert e_{l+1} \Vert^2$, and at the output layer $\gamma_L = \alpha_{L-1}(\frac{1}{\beta} + \Vert f(\hat{h}_{L-1}) \Vert^2) - 1$. In the limit where $\hat{h}^{(b)}_{L-1} \rightarrow h_{L-1}^{(b+1)}$, it is the case that $\theta_{IL}^{(b+1)} = \theta_{prox}^{(b+1)}$.
\end{theorem}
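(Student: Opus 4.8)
The plan is to prove the identity componentwise. Both algorithms start from $\theta^{(b)}$, so it suffices to show that for every weight matrix the IL update $\Delta W_l = \alpha_l e_{l+1} f(\hat h_l)^T$ of equation (\ref{eq:LMSUpdate}) coincides with the proximal / implicit--SGD update $-\beta\,\partial\mathcal{L}(\theta^{(b+1)})/\partial W_l$ of equation (\ref{eq:prox}). Expanding the latter through a backward pass of the feed-forward network evaluated at the updated parameters gives $-\beta\,\delta_{l+1}^{(b+1)} f(h_l^{(b+1)})^T$, where $\delta^{(b+1)}$ are the ordinary backpropagated errors of $\mathcal{L}$ at $\theta^{(b+1)}$, obeying $\delta_l^{(b+1)} = f'(h_l^{(b+1)})\,((W_l^{(b+1)})^T\delta_{l+1}^{(b+1)})$ with the element-wise product understood as in (\ref{eq:actUpdate}) and $\delta_L^{(b+1)} = h_L^{(b+1)}-y$ for the squared-error loss. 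The limiting hypothesis $\hat h^{(b)}_{L-1}\to h^{(b+1)}_{L-1}$ is what aligns the activity factors: it says a forward pass with the updated weights reproduces the optimized activities, so $f(\hat h_l)=f(h_l^{(b+1)})$ and the claim reduces to the vector relation $\alpha_l e_{l+1} = -\beta\,\delta_{l+1}^{(b+1)}$ at every layer.

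First I would establish this relation at the output weight $W_{L-1}$. The soft-clamp rule (\ref{eq:outUpdate}) gives $e_L = (1+\gamma_L)^{-1}(y-p_L)$ in closed form, and using $\hat h^{(b)}_{L-1}=h^{(b+1)}_{L-1}$ together with $W_{L-1}^{(b+1)} = W_{L-1}^{(b)} + \alpha_{L-1}e_L f(\hat h_{L-1})^T$, the forward output at iteration $b{+}1$ is $h_L^{(b+1)} = p_L + \alpha_{L-1}\Vert f(\hat h_{L-1})\Vert^2 e_L$, hence $\delta_L^{(b+1)} = h_L^{(b+1)}-y$. Substituting the prescribed $\gamma_L = \alpha_{L-1}(\tfrac1\beta + \Vert f(\hat h_{L-1})\Vert^2)-1$, so that $1+\gamma_L = \tfrac{\alpha_{L-1}}{\beta}(1+\beta\Vert f(\hat h_{L-1})\Vert^2)$, a short computation collapses $\alpha_{L-1}e_L$ and $-\beta\,\delta_L^{(b+1)}$ to the same vector $\tfrac{\beta}{1+\beta\Vert f(\hat h_{L-1})\Vert^2}(y-p_L)$; this is the only place the $\tfrac1\beta$ term inside $\gamma_L$ is used, and the rest is routine algebra.

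Next I would propagate the relation down the network by downward induction on $l$. Assuming $\alpha_l e_{l+1} = -\beta\,\delta_{l+1}^{(b+1)}$, I would expand $W_l^{(b+1)} = W_l^{(b)} + \alpha_l e_{l+1}f(\hat h_l)^T$ inside the backprop recursion, so that a correction term proportional to $\alpha_l^2\Vert e_{l+1}\Vert^2 f(\hat h_l)$ appears, and compare it with the inference-phase stationarity condition obtained by setting (\ref{eq:actUpdate}) to zero. Inserting the prescribed $\gamma_l = \alpha_{l-1}^2\Vert f(\hat h_{l-1})\Vert^2$ and $\gamma^{decay}_l = \alpha_l^2\Vert e_{l+1}\Vert^2$, the prediction-error term should reproduce the $W_l^T$-transported signal and the decay term should reproduce precisely the correction coming from replacing $W_l^{(b)}$ by $W_l^{(b+1)}$, leaving $\alpha_{l-1}e_l = -\beta\,\delta_l^{(b+1)}$ and closing the induction. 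Feeding this back into (\ref{eq:LMSUpdate}) gives $\Delta W_l = -\beta\,\partial\mathcal{L}(\theta^{(b+1)})/\partial W_l$ for all $l$, i.e. $\theta_{IL}^{(b+1)} = \theta_{prox}^{(b+1)}$.

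The hard part will be this inductive step. It requires the decay term to match, to leading order, the gap between transporting errors through the pre-update weights $W_l^{(b)}$ (as PC inference does) and through the post-update weights $W_l^{(b+1)}$ (as the implicit gradient does), which is exactly why $\gamma^{decay}_l$ is tied to $\Vert e_{l+1}\Vert^2$; I expect to need care with the element-wise $f'$ factors and with an identity of the form $f'(\hat h)\,\hat h \approx f(\hat h)$ for the nonlinearities in use. The second delicate point is justifying that the single limit at layer $L{-}1$, rather than one limit per layer, suffices to force $f(\hat h_l)=f(h_l^{(b+1)})$ throughout and to keep the $e_l$ consistent with the updated forward pass. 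I would sanity-check the whole chain of identities on the linear-activation case before trusting the general argument.
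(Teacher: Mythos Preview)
Your route is genuinely different from the paper's. You try to verify the implicit-SGD fixed point directly: show that each IL increment $\alpha_l e_{l+1} f(\hat h_l)^T$ coincides with $-\beta\,\partial\mathcal{L}(\theta^{(b+1)})/\partial W_l$ by running backpropagation through the \emph{updated} network and matching $\alpha_l e_{l+1}$ with $-\beta\,\delta_{l+1}^{(b+1)}$ layer by layer via downward induction. The paper instead reparametrises the proximal problem: it writes the post-update parameters as a function of the activities, $\theta = j(\hat h) = \theta^{(b)} + g(\hat h)$ with $g(\hat h)$ the collection of IL updates, and then shows (Lemma~1) that the partial $\hat h$-gradients of $Prox(j(\hat h),\hat h)$ and of $F(\theta^{(b)},\hat h)$ are \emph{proportional}, hence share stationary points (Lemma~2). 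The $\hat h$ minimising $F$ therefore also minimises $Prox$, and applying $j$ yields both $\theta_{IL}^{(b+1)}$ and $\theta_{prox}^{(b+1)}$.

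This reparametrisation is precisely what dissolves the two difficulties you flagged. First, the limit $\hat h_{L-1}\to h_{L-1}^{(b+1)}$ is used \emph{only} in the output-layer computation; for hidden $\hat h_l$ the paper compares $\partial F/\partial\hat h_l$ with the $\hat h_l$-derivative of the regulariser $\tfrac{1}{2\beta}\Vert g(\hat h)\Vert^2$ alone (the loss being treated as not explicitly dependent on hidden activities), and that derivative is a purely local expression in $e_l,e_{l+1},f(\hat h_{l-1}),f(\hat h_l)$ --- no need to propagate the limit down the network. Second, the prescribed $\gamma$'s are tuned to make these two $\hat h$-gradients \emph{proportional} (by the common factor $1/\beta$), not to force the vector identity $\alpha_{l-1}e_l=-\beta\,\delta_l^{(b+1)}$. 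If you push your inductive step through you will find it does not close: the transport term carries an extra factor $\alpha_l\Vert f(\hat h_l)\Vert^2$ (from $\gamma_{l+1}$) that is absent from $-\beta\,\delta_l^{(b+1)}$, and the decay contribution from the $F$-stationarity is $-\gamma_l^{decay}f'(\hat h_l)\,\hat h_l$, whereas the correction from replacing $W_l^{(b)}$ by $W_l^{(b+1)}$ in backprop is $+\alpha_l^2\Vert e_{l+1}\Vert^2 f'(\hat h_l)\,f(\hat h_l)$ --- opposite sign and $\hat h_l$ versus $f(\hat h_l)$, exactly the mismatch you anticipated. The paper's argument never needs those vector identities; it only needs the two objectives over $\hat h$ to share critical points, which is why proportionality (rather than equality) of the layerwise gradients suffices.
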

The theorem's proof is in appendix \ref{app:thrm1proof} and its extension to larger mini-batches is in appendix \ref{app:minib}. Thus, under these conditions, IL is an implementation of the proximal algorithm. In particular, IL implements the proximal algorithm in an indirect way by defining the parameters, $\theta$, optimized by the proximal update (eq. \ref{eq:prox}) as a function of hidden/auxiliary variables $\hat{h}$, i.e., $\theta = j(\hat{h})$, which is then optimized through updates to $\hat{h}$ (see \ref{app:thrm1proof} for details). The limit in which this theorem is true is the limit where the last hidden layer activity $\hat{h}_{L-1}^{(b)}$ approaches the feed-forward activity after weights are updated, $h_{L-1}^{(b+1)}$, given the same input $x^{(b)}$. This limit holds in the case where local errors are fully minimized by the weight updates. The proof also depends on particular settings for the $\gamma$ terms. In practice, these settings are usually not used, but we can see that the $\gamma_l$ terms will generally be positive scalars and are therefore approximated in practice by weighting each term with some positive scalar (see \cite{alonso2022theoretical} and figure \ref{fig:proxPart} and supplementary figure \ref{fig:proxFull}). Further, we can see that $\gamma^{decay} \approx 0$ at most hidden layers, since the magnitude of errors are initially zero and remain small afterward, providing justification for the common practice of neglecting the decay term in $F$. Finally, the $\beta$ term, which is the 'global' learning rate in the implicit SGD/proximal update, affects the clamping of the output layer: if $\beta \rightarrow 0$ then $\gamma_L \rightarrow \infty$ and $\hat{h}_L \rightarrow h_L$ (i.e., as $\beta$ gets smaller the output layer is more softly clamped). If $\beta \rightarrow \infty$ then $\gamma_L$ will get smaller and the output layer will become more strongly clamped. Indeed, in figure \ref{fig:proxPart}, we show empirically that several implementations of IL with a fully clamped output layer significantly reduce the proximal objective with a large $\beta$ during the inference phase (see fig. \ref{fig:proxFull} and appendix \ref{app:proxMethod} for more detail and evidence).

\subsection{IL is Sensitive to Second-Order Information}
Recent work found that IL, even without optimizers, often reduces loss more quickly than vanilla BP/SGD, especially when small mini-batches are used \cite{alonso2022theoretical}. Relatedly, it was found empirically by multiple works that IL often takes a shorter/more direct path toward local minima than BP/SGD \cite{alonso2022theoretical, song2022inferring}. However, it has not been clearly explained mathematically why and how these shorter paths and quicker convergence are achieved. Here we provide some explanation based on the insights of Toulis et al. \cite{toulis2014implicit, toulis2016stochastic}, who showed that implicit SGD is sensitive to second-order information for small learning rates. Let $\Delta \theta^{(b)}_{IL}$ equal the IL update at iteration $b$.

\begin{theorem}\label{thrm:2ndOrder}
Consider neural network parameters $\theta^{(b)}$ at iteration $b$ with mini-batch size 1. Assume weight-wise step sizes $\alpha_l$ and the $\gamma$ variables are set so that $\Delta \theta^{(b)}_{IL} = -\beta \frac{\partial \mathcal{L}(\theta^{(b+1)})}{\partial \theta^{(b+1)}}$. With these assumptions we have $\Delta \theta^{(b+1)}_{IL} = -\beta \frac{\partial \mathcal{L}(\theta^{(b+1)})}{\partial \theta^{(b+1)}} \approx -(I + \beta H)^{-1} \beta \frac{\partial \mathcal{L}(\theta^{(b)})}{\partial \theta^{(b)}}$ with error $\mathcal{O}(\beta^2)$, where $H = \frac{\partial^2 \mathcal{L}(\theta^{(b)})}{\partial \theta^{(b)2}}$.
\end{theorem}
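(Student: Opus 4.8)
The plan is to treat the defining relation of the IL update as an implicit equation in the update itself and solve it perturbatively in $\beta$, mirroring the argument of Toulis et al.\ \cite{toulis2014implicit,toulis2016stochastic}. Write $\Delta\theta := \theta^{(b+1)}-\theta^{(b)}$ for the IL update that produces $\theta^{(b+1)}$ from $\theta^{(b)}$. By the theorem's hypothesis (which, per Theorem~\ref{thrm:impSGD=IL}, is realized by suitable choices of the $\alpha_l$ and $\gamma$ terms), $\Delta\theta = -\beta\,\partial\mathcal{L}(\theta^{(b+1)})/\partial\theta^{(b+1)}$, so substituting $\theta^{(b+1)} = \theta^{(b)}+\Delta\theta$ turns this into the fixed-point equation $\Delta\theta = -\beta\,\nabla\mathcal{L}(\theta^{(b)}+\Delta\theta)$. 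The entire proof then consists of (i) showing this equation has a solution of size $\mathcal{O}(\beta)$, and (ii) Taylor-expanding its right-hand side about $\theta^{(b)}$ and rearranging.

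For (i), I would assume $\mathcal{L}$ is twice continuously differentiable near $\theta^{(b)}$ with locally Lipschitz Hessian, say $\|\nabla^2\mathcal{L}\|\le L$ on a ball around $\theta^{(b)}$. Then the map $\Delta\mapsto -\beta\,\nabla\mathcal{L}(\theta^{(b)}+\Delta)$ has Jacobian $-\beta\,\nabla^2\mathcal{L}$ of operator norm at most $\beta L$, so for $\beta L<1$ it is a contraction on a small closed ball around $0$; Banach's fixed-point theorem gives a unique update $\Delta\theta$ with $\|\Delta\theta\|\le \beta\|\nabla\mathcal{L}(\theta^{(b)})\|/(1-\beta L)=\mathcal{O}(\beta)$. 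For (ii), a first-order Taylor expansion with remainder gives $\nabla\mathcal{L}(\theta^{(b)}+\Delta\theta) = \nabla\mathcal{L}(\theta^{(b)}) + H\,\Delta\theta + r$, where $H = \partial^2\mathcal{L}(\theta^{(b)})/\partial\theta^{(b)2}$ is the Hessian of the statement and $\|r\| = \mathcal{O}(\|\Delta\theta\|^2) = \mathcal{O}(\beta^2)$ by the Lipschitz-Hessian bound. Substituting into the fixed-point equation and moving the $\Delta\theta$ terms to the left yields $(I+\beta H)\,\Delta\theta = -\beta\,\nabla\mathcal{L}(\theta^{(b)}) + \mathcal{O}(\beta^3)$.

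To finish, I would note that for $\beta$ small enough $I+\beta H$ is invertible with $\|(I+\beta H)^{-1}\|$ uniformly bounded (its eigenvalues are $1+\beta\lambda_i(H)$, bounded away from $0$), so left-multiplying gives $\Delta\theta = -(I+\beta H)^{-1}\beta\,\nabla\mathcal{L}(\theta^{(b)}) + \mathcal{O}(\beta^2)$, which is exactly the claimed identity with gradient and Hessian evaluated at $\theta^{(b)}$ and error $\mathcal{O}(\beta^2)$ (in fact $\mathcal{O}(\beta^3)$ under the stated smoothness). I would close with the interpretive remark that expanding $(I+\beta H)^{-1} = I-\beta H + \mathcal{O}(\beta^2)$ exhibits the IL step as an explicit SGD step preconditioned by a curvature-dependent matrix, so to leading order IL shrinks the update along high-curvature directions of $\mathcal{L}$ — the precise sense in which IL is sensitive to second-order information, and a plausible mechanism for the shorter optimization trajectories observed empirically \cite{alonso2022theoretical,song2022inferring}.

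The main obstacle is one of rigor rather than of idea: making the informal "$\Delta\theta$ is small, so drop higher-order terms" honest requires (a) the existence/size argument for the implicit update, which is why the contraction step is needed, and (b) uniform control of the Taylor remainder, which forces the local smoothness hypothesis on $\mathcal{L}$ together with the observation that $\theta^{(b+1)}$ stays in the good neighborhood once $\beta$ is below threshold. The $\mathcal{O}(\cdot)$ constants unavoidably depend on $\|\nabla\mathcal{L}(\theta^{(b)})\|$ and on the local curvature bounds of $\mathcal{L}$ at $\theta^{(b)}$, which I would state explicitly; everything past the fixed-point equation is the routine algebra of a first-order perturbation expansion.
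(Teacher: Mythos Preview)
Your proposal is correct and follows essentially the same route as the paper: both set up the implicit relation $\Delta\theta=-\beta\nabla\mathcal{L}(\theta^{(b)}+\Delta\theta)$, Taylor-expand the gradient about $\theta^{(b)}$ to first order, collect the $\Delta\theta$ terms into $(I+\beta H)\Delta\theta$, and invert. Your version is more careful than the paper's (you add the contraction argument for existence and size $\mathcal{O}(\beta)$ and make the smoothness hypotheses explicit), whereas the paper simply asserts the $\mathcal{O}(\beta^2)$ remainder without justifying that $\Delta\theta_{imp}=\mathcal{O}(\beta)$; but the underlying computation and conclusion are identical.
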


For proof see appendix \ref{app:proofThrm2}. This theorem implies that implicit SGD, and IL when it approximates implicit SGD closely, is sensitive to second-order information. Specifically, IL/implicit SGD approximates a shrinked version of the gradient, where the shrinkage is determined by the inverse Hessian. This use of second-order information, about the curvature of the loss landscape, can help account for why IL often converges more quickly and takes more direct paths toward minima than standard SGD. This theorem also implies that when $\beta$ is small ($0 < \beta < 1$), IL closely approximates the Newton-Raphson algorithm with 'damping' (scaling by a small learning rate) and Levenburg-Marquardt regularization \cite{nesterov2006cubic}, which is: $\theta^{(b+1)} = \theta^{(b)} - \alpha (\textbf{I}\lambda + H^{-1}) \frac{\partial \mathcal{L}(\theta^{(b)})}{\partial \theta^{(b)}}$, where $\lambda$ is a scalar weighting term and $\alpha$ the learning rate/damping term. Thus, for small but non-zero $\beta$ and when IL approximates implicit SGD closely, IL approximates a regularized and damped version of the Newton-Raphson method with small error, $\mathcal{O}(\beta^2)$. Note that as $\beta \rightarrow \infty$ the approximation to Newton-Raphson becomes worse because IL becomes increasingly sensitive to even higher-order information in the Taylor-expansion. \textit{As far as we know, this is the first time this relation between IL and the Newton Raphson-method and higher order information has been shown}. We find empirically that our implementations of IL reduces the proximal objective well for both large and some small values of $\beta$ (fig. \ref{fig:proxPart} and \ref{fig:proxFull}), suggesting IL is generally sensitive to second and higher order information.

\subsection{Reinterpreting IL's Relation to BP and SGD}
Previous work found that in the limit where 
$\hat{h}_l \rightarrow h_l$, IL approaches BP/SGD \cite{whittington2017approximation, whittington2019theories, millidge2022backpropagation, rosenbaum2022relationship}. This limit is equivalent, in our notation, to the limit where the $\beta \rightarrow 0$ since in this case the output layer $\hat{h}_L$ approaches its initial value $h_L$, resulting in $\hat{h}_l \rightarrow h_l$. Notice that this is exactly what theorem \ref{thrm:2ndOrder} implies: as $\beta \rightarrow 0$ the Hessian term $\beta H$ and the error, $\mathcal{O}(\beta^2)$, go to zero, and we are left with a standard SGD update. \textit{Thus, consistent with these previous results, the implicit SGD interpretation predicts that IL should approach SGD as $\beta \rightarrow 0$ and $\hat{h} \rightarrow h$. However, theorems \ref{thrm:impSGD=IL} and \ref{thrm:2ndOrder} imply that standard/explicit SGD is not the best interpretation of IL when $\beta > 0$, since the SGD approximation has a larger error, $\mathcal{O}(\beta)$, than approximations that take into account higher order terms, which have at most error of $\mathcal{O}(\beta^2)$}.

\section{The Matrix Update Equalization Optimizer}
We present a novel optimizer, called Matrix Update Equalization (MQ), made specifically for the IL algorithm. Previous work (e.g. \cite{alonso2022theoretical}, also see below) observed that IL often converges to poor local minima when it does not use optimizers with parameter-wise adaptive learning rates, like Adam. However, Adam and similar optimizers are memory intensive. MQ seeks to provide similar benefits as Adam without any significant cost in memory or computation. It is motivated by two observations. First, Theorem 1 and 2 imply IL algorithms that only use matrix-wise scalar learning rates are sensitive to second and higher-order information. Second, IL updates, without the use of Adam, tend to be very small\footnote{It is worth noting that IL's tendency for small update magnitudes is consistent with the proximal interpretation.}, especially at early layers (see figure \ref{fig:wtAnlyze}), which likely explains why IL gets caught in shallow local minima nearer its initial values. Adam makes updates larger and more equal across matrices. These observations suggest that the main benefit of using optimizers like Adam with IL is that it prevents weight updates at early layers from becoming too small, pushing parameters further to deeper and flatter minima. This is contrary to the benefits Adam is supposed to provide BP/SGD, which is in large part to provide BP with an estimate of second order information \cite{kingma2014adam} to speed up training.
\begin{wrapfigure}{R}{0.36\textwidth}
\centering
\includegraphics[width=0.35\textwidth]{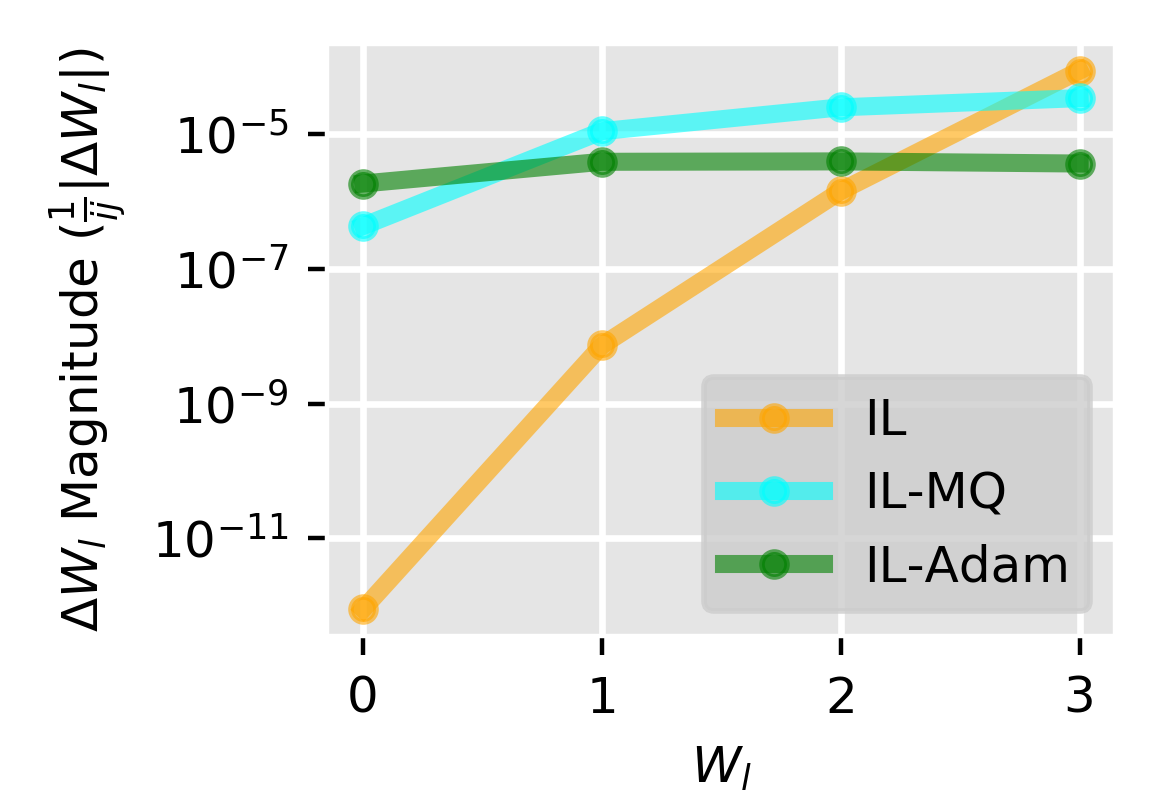}
\caption{Average update magnitude of each weight matrix in MLP with three hidden layers.}\label{fig:wtAnlyze}
\vspace{-10pt}
\end{wrapfigure}

Motivated by these observations, MQ uses only a single scalar learning rate per layer that adapts to prevent overly small update magnitudes and to better equalize the update magnitudes between matrices. MQ stores several scalars: a constant matrix-wise learning rate, $\alpha_l$, for each matrix $W_l$, a constant global minimum learning rate, $\alpha_{min}$, and matrix wise scalar $v_l$, which is updated continuously to adapt to the moving average of the update magnitude for matrix $W_l$. The MQ weight update for weight matrix $W_l$ is
\begin{equation}\label{eq:mqUpdate}
W_l^{(b+1)} = W_l^{(b)} - (\frac{\alpha_l}{v_l+r} + \alpha_{min}) \frac{\partial F}{\partial W_{l}^{(b)}},
\end{equation}
where $r$ is a constant that prevents division by zero and stabilizes learning. The minimum learning rate $\alpha_{min}$ is used to prevent learning from decreasing too much in any one matrix. The variable $v_l$ is updated each training iteration $b$ as follows:
\begin{equation}
v_l^{(b+1)} = (1 - \rho) v_l^{(b)} - \rho \frac{1}{ij} \vert \frac{\partial F}{\partial W_{l}^{(b)}} \vert,
\end{equation}
where $i$ and $j$ refer to the row and column sizes, respectively, of the gradient $\frac{\partial F}{\partial W_l}$ and $\vert \frac{\partial F}{\partial W_{l}^{(b)}} \vert$ is the L-1 norm of the gradient. Finally, $\rho$ is a global hyper-parameter (scalar). Note, although $\rho$ is a hyper-parameter, we use a method for reducing bias early in training, which is explained in appendix \ref{app:MQMethod}. Thus, $v_l$ first averages the update over elements of the absolute value of the gradient (right term), then it takes a weighted average of this with the previous value of $v_l$. Figure \ref{fig:wtAnlyze} shows MQ reduces the differences in update magnitude between weights matrices nearly as well as Adam and prevents updates at early layers form becoming too small.

\section{Reducing Computation with Sequential Inference}
\begin{figure}[t]
\includegraphics[width=.99\textwidth]{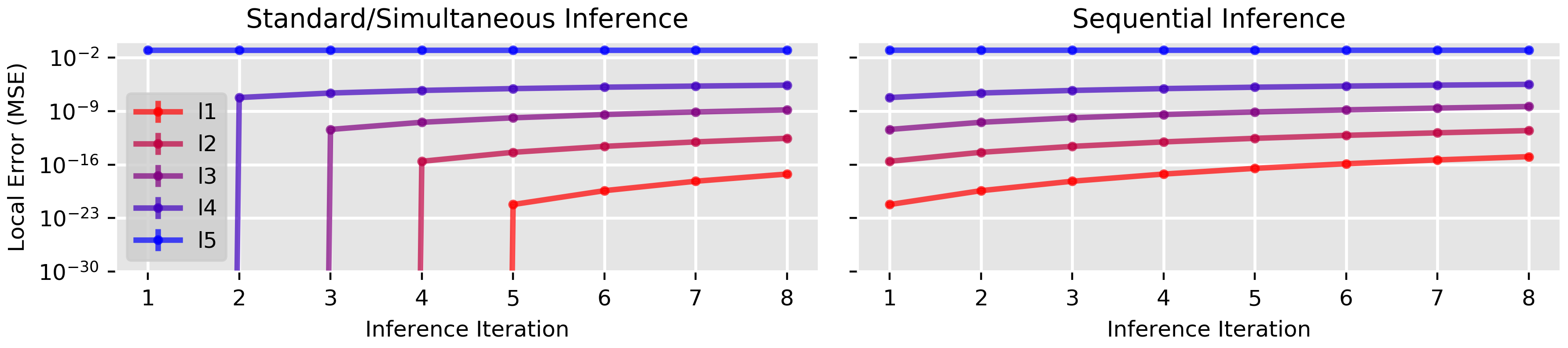}
\caption{Measurement of the mean of the squared local errors, $e_l$, at each hidden layer (layer 1-4) and the output layer (layer 5) in a fully connected network with four hidden layers.}
\centering
\label{fig:seqInfTest}
\end{figure}

The inference phase accounts for most of the computational cost of the IL algorithm. Typically, activities are updated for $T=15$ or more iterations (e.g., \cite{whittington2017approximation, alonso2021tightening, salvatori2022learning}), which requires around $T(2L-1)$ matrix multiplications. The obvious way to reduce computation is to truncate the inference phase (reduce $T$). The difficulty with this approach is that the standard implementation of the inference phase requires at least $T \geq (L-2)$ in order for non-zero errors to form at each hidden layer (see figure \ref{fig:seqInfTest}), and typically more iterations to attain the best performance (see experiments section), e.g., a network with four hidden layers typically requires at least $T \geq 4$, and usually much more than $4$ to perform well. In figure \ref{fig:seqInfTest} and \ref{fig:simVseqDiagram}, we show that the standard implementation requires large $T$ because there is a delay in error propagating through the network. Each inference iteration, $t$, the standard method computes and stores errors at each layer, then these stored errors are used to update the activities according to equation \ref{eq:actUpdate} (see algorithm \ref{alg:1}). This process simulates activities being updated simultaneously in time, since the update at one layer does not alter the updates or errors at other layers until the next iteration/time step. Instead of updating activities simultaneously, we propose updating them one at a time starting from the output layer and moving back in a sequence (see algorithm \ref{alg:2}). We call this method \textit{sequential inference}. Sequential inference removes the delay and progagates non-zero errors to every layer in a single iteration, allowing for smaller $T$ (see figure \ref{fig:seqInfTest} and supplementary figure \ref{fig:simVseqDiagram}). \footnote{Though this is a simple alteration, we could only find one work where an implementation similar to sequential inference is used \cite{rosenbaum2022relationship}, and in that work this implementation was not discussed in the paper, analyzed, or used as a method for reducing computation. Instead, it seemed to be an arbitrary implementation decision.}

\section{Experiments}
\begin{wrapfigure}{R}{0.4\textwidth}
\centering
\includegraphics[width=0.35\textwidth]{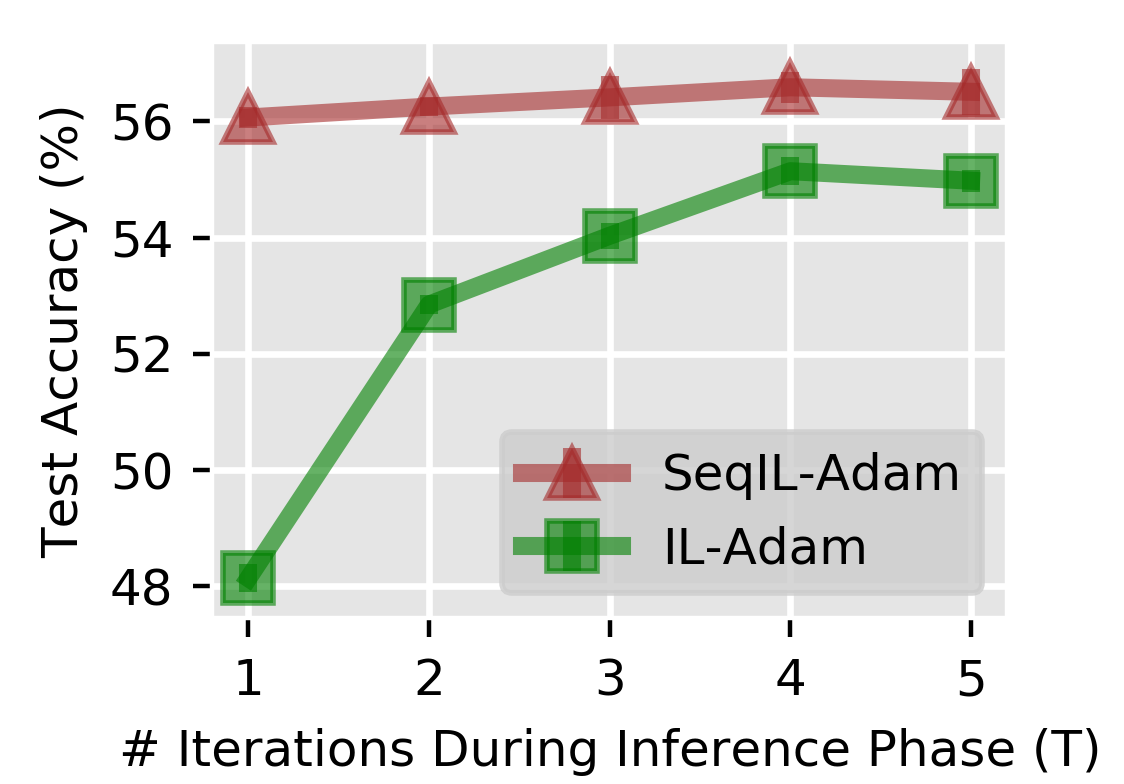}
\caption{MLPs trained on CIFAR-10 for 45 epochs, using either sequential inference with Adam (SeqIL-Adam) or simultaneous inference with Adam (SimIL-Adam). X-axis specifies the number of inference iterations used during training.}\label{fig:seqVsimAcc}
\vspace{-5pt}
\end{wrapfigure}
In this section, we test the performance of sequential inference and the MQ optimizer. The MQ optimizer works well in all models using following hyper-parameters: $\alpha_{min} = .001$, $r = .000001$, and $\rho = .9999$ for fully connected and $\rho =.999$ for convolutional networks. Grid searches are used to find learning rates, $\alpha_l$. Mini-batches size 64 are used in all simulations.

\textbf{Comparing Sequential and Simultaneous IL} The standard/simultaneous inference method creates a delayed error propagation through the network, as illustrated in figure \ref{fig:seqInfTest}. We test how this delay effects performance at very small values of $T$ on a classification of CIFAR-10 images. An MLP with four hidden layers, dimension 3072-4x1024-10, was trained on CIFAR-10 using both standard/simultaneous inference with Adam optimization (IL-Adam) and sequential inference with Adam optimization (SeqIL-Adam) for different values of $T$. Grid searches were used to find the learning rate and step size, $\epsilon$, for activity updates. Models are trained over 45 epochs, about the amount of time needed for learning to near convergence. Results shown in figure \ref{fig:seqVsimAcc}. SeqIL-Adam performs approximately the same for all values of $T$ (1-5), while IL-Adam's performance is highly sensitive to the value of $T$, with performance degrading significantly for smaller values of $T$.

\begin{figure}[t]
\includegraphics[width=\textwidth]{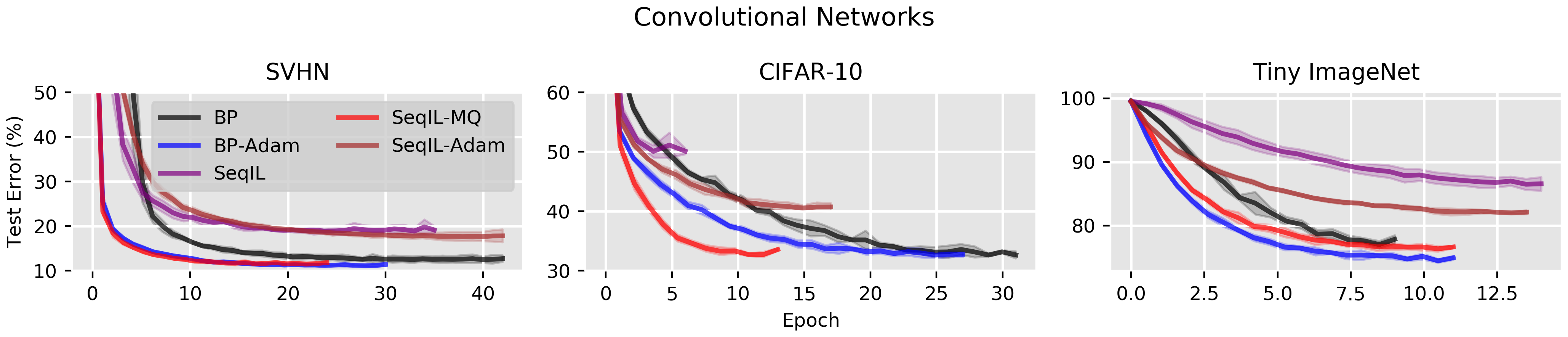}
\caption{Small convolutional networks trained on classification tasks. Training runs are averaged over 5 seeds. Shaded error bars show standard deviation across seeds. Each training run is shown up until convergence (the point of lowest test error +1 epoch).}
\centering
\label{fig:conv}
\end{figure}

\textbf{Classification with Natural Images} Next, we test the MQ optimizer combined with SeqIL on classifications tasks with natural image data sets: SVHN, CIFAR-10, and Tiny Imagenet. We train fully connected MLPs dimension 3072-3x1024-10 on SVHN and CIFAR-10, and small convolutional networks on SVHN, CIFAR-10, and Tiny Imagenet. All of IL algorithms use a highly truncated inference phases of T=3. In addition to sequential IL with MQ (SeqIL-MQ), we test SeqIL with no optimizer (SeqIL) and SeqIL with Adam (SeqIL-Adam). We compare to the performance of BP-SGD and BP with Adam (BP-Adam). Grid searches over learning rates were performed to find the learning rate that yielded the best test performance at convergence. SeqIL-MQ test accuracies were never significantly worse than BP and slightly better than BP on Tiny ImageNet, while consistently performing comparably to BP-Adam (table \ref{tab:Acc}). Further, SeqIL-MQ reduced loss more quickly early in training than BP-SGD in all simulations, and typically converged much more quickly (figure \ref{fig:conv}). SeqIL-MQ learning curves looked more similar to BP-Adam, despite the fact SeqIL-MQ uses about one third the memory of Adam per parameter. SeqIL-Adam matched the performance of SeqIL-MQ in fully connected networks but struggled in convolutional networks. We believe this is because in convolutional networks SeqIL-Adam had very small weight updates, which could not be increased by simply increasing learning rate due to instability (see app. fig \ref{fig:wtAnlyzConv}). Consistent with previous work on IL, we find that SeqIL without optimizers often reduces loss more quickly than BP early in training in MLPs, but struggled to converge to good minima.
\begin{figure}[b]
\includegraphics[width=\textwidth]{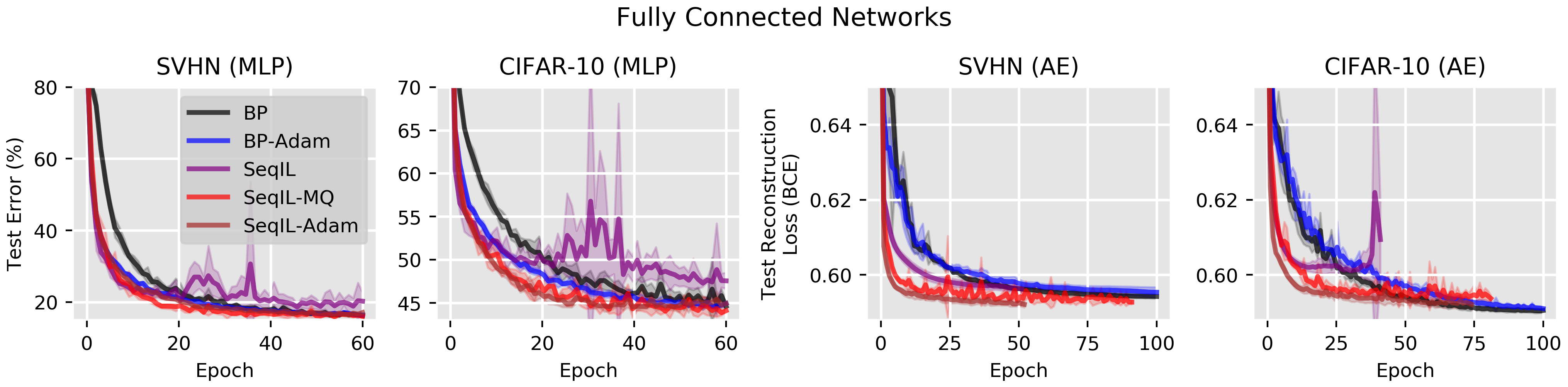}
\caption{Fully connected networks trained for classification tasks (MLP) and for autoencoder tasks (AE). Training runs are averaged over 5 seeds. Shaded error bars show standard deviation across seeds. The first 60 epochs of training shown for MLPs. For AEs, training runs are shown to convergence (lowest loss +1 epoch) or the max 100 epochs.}
\centering
\label{fig:FC}
\end{figure}

\begin{table}[t]
\centering
\begin{adjustbox}{width=1\textwidth}
\small
  \begin{tabular}{c | c c | c c | c c}
    \toprule
    \multicolumn{7}{c}{Classification Test Accuracies}\\
    \toprule
    & SVHN (FC) & SVHN (Conv) & CIFAR (FC) & CIFAR (Conv) & TinyImNet Top 1 & Top 5 \\
    \toprule
    BP & \textbf{84.40}$^{(\pm.12)}$ & $87.82^{(\pm.67)}$ & $56.55^{(\pm.08)}$ & \textbf{67.76}$^{(\pm.43)}$ & $23.14^{(\pm.42)}$ & $47.05^{(\pm.46)}$\\
    BP-Adam & $84.08^{(\pm.13)}$ & \textbf{89.02}$^{(\pm.19)}$ & $56.15^{(\pm.21)}$ & \textbf{67.51}$^{(\pm.45)}$ & \textbf{25.61}$^{(\pm.22)}$ & \textbf{50.04}$^{(\pm.41)}$\\
    SeqIL & $82.19^{(\pm.70)}$ & $ 81.55^{(\pm.59)}$ & $54.24^{(\pm.18)}$ & $50.67^{(\pm.76)}$ & $13.88^{(\pm.73)}$ & $33.42^{(\pm1.16)}$\\
    SeqIL-MQ & \textbf{84.51}$^{(\pm.17)}$ & $ 88.67^{(\pm.18)}$ & \textbf{57.70}$^{(\pm.21)}$ & \textbf{67.50}$^{(\pm.19)}$ & $23.91^{(\pm.31)}$ & $47.53^{(\pm.48)}$\\
    SeqIL-Adam & $84.13^{(\pm.15)}$ & $ 82.76^{(\pm.87)}$ & $56.21^{(\pm.20)}$ & $59.72^{(\pm.48)}$ & $18.22^{(\pm.29)}$ & $39.58^{(\pm.46)}$\\
    \bottomrule
\end{tabular}
\end{adjustbox}
\caption{Best test accuracies averaged across 5 seeds (mean $\pm$ standard dev.). Top scores and scores not significantly different from top score (according to two-sample t-test) are bolded.}
\vspace{-10pt}
\end{table}\label{tab:Acc}

\textbf{Autoencoders} We trained fully connected autoencoders (layer sizes 3072-1024-256-20-256-1024-3072) with ReLU at hidden layers and sigmoid at the output layer. Networks were trained on SVHN and CIFAR-10. A grid search was used to find a learning rate that produces the lowest test loss at convergence or after 100 epochs. Testing auto-encoders is important because, unlike classifiers, their output layers are large, which may make the task of propagating errors through the network easier for IL algorithms. IL models use T=6 inference iterations. We find that all IL models reduce the loss significantly faster than both BP and BP-Adam early in training (figure \ref{fig:FC}, right two plots). IL-MQ and IL-Adam converge to similar reconstruction losses as the BP algorithms.

\section{Related Work}
\textbf{Theoretical work on IL} Several recent works develop theoretical analyses of IL. Song et al. \cite{song2022inferring} argue IL and similar energy-based algorithms, like Constrastive Hebbian Learning \cite{movellan1991contrastive}, work differently than BP. Unlike BP, these algorithms first compute the desired neural activities, then update parameters to consolidate these neural activities (i.e., make them more probable). They call this approach 'prospective configuration'. They propose the brain likely learns in a way similar to prospective configuration and provide empirical support. Millidge et al. \cite{millidge2022theoretical} provide a formal analysis showing the neural activities computed by prospective configuration algorithms are similar to regularized Gauss-Newton targets, and IL is a variant EM algorithm known as maximum a posteriori learning. Alonso et al. \cite{alonso2022theoretical} developed an alternative, though compatible, analyses of IL that showed its approximation to the proximal algorithm/implicit SGD, and further examined the differences between IL and BP such as differences in stability across learning rates. The theoretical work here expands on these previous findings by showing 1) the results of Alonso et al. extend approximately to the more general case where a static learning rate is used at each layer (instead of a dynamic normalized learning rate) and 2) for the first time shows that, under the assumption IL is closely approximating implicit SGD, describes how IL is sensitive to higher order information. We believe our results provide a deeper understanding of how IL differs from BP/SGD than these previous works.

\textbf{Algorithmic Work on IL} The IL algorithm has recently been altered in several ways. One line of work altered IL in order to make its weight updates more similar to SGD (e.g., \cite{song2020can, millidge2020predictive}). Alonso et al. altered IL to make its weight updates better approximate the proximal algorithm/implicit SGD. Our alterations to IL differ from these previous works, as they do not seek to alter IL to make it more similar to SGD or implicit SGD. Salvatori et al. \cite{salvatori2022incremental} developed a variant of IL based on incremental EM \cite{neal1998view}, called incremental PC, which works by updating weights each inference iteration instead of at the end of the inference phase. Incremental PC has some benefits (e.g., more bio-plausible and reduces free energy quickly during inference). However, incremental PC still suffers from the delayed error propagation problem and its implementation in \cite{salvatori2022incremental} uses Adam optimizers. It is unclear whether incremental PC avoids convergence issues when Adam is not used. Incremental PC also requires more matrix multiplications each training iteration than SeqIL, given the same number of inference iterations (T) (see appendix table \ref{tab:iPCIL}). Our goal was to reduce computation of IL while improving convergence, so we do not further develop incremental PC here.

\section{Limitations}
More testing is needed to establish how well our methods work in larger networks with more than 3-5 hidden layers and to establish how our algorithms' behavior compares to BP across different batch sizes (we only test with mini-batch size 64). 

\section{Discussion}
Biologically motivated algorithms have historically faced at least two central challenges: 1) difficulty converging to test losses as good as BP/SGD when scaled to natural images \cite{bartunov2018assessing} and/or 2) requiring far more computation than BP/SGD. A common method for dealing with one or both of these issues is to make one's bio-plausible algorithm of choice more similar to BP/SGD (e.g., \cite{scellier2017equilibrium, song2020can, millidge2020predictive, laborieux2021scaling, ernoult2022towards}). In this paper, we took a different strategy. We took a highly biologically constrained algorithm, IL, and showed its differences from BP/SGD are not undesirable properties that ought to be removed. Instead, IL differs from BP/SGD because it implements a different optimization method, which has useful sensitivity to higher-order information. On the basis of our analysis, we altered the way its recurrent processing worked to significantly reduce computation and we created a custom optimizer made specifically to address the convergence issue of the IL algorithm. This is not unlike the way many standard optimizers, e.g., Adam, were specifically created to address shortcomings of BP/SGD. The result is an algorithm that, in our simulations on natural images, matched BP/SGD in terms of final test accuracy/loss, memory (table \ref{tab:MemComp}), and training time (in seconds) (fig. \ref{fig:RealTime}), while often reducing loss and converging more quickly than BP/SGD (in number of training iterations). These results suggest IL is a promising algorithm for further development and application in the bio-inspired and neuromorphic machine learning communities.

\bibliographystyle{plain}
\bibliography{neurips_2023}

\begin{thebibliography}{10}

\bibitem{alonso2021tightening}
Nicholas Alonso and Emre Neftci.
\newblock Tightening the biological constraints on gradient-based predictive
  coding.
\newblock In {\em International Conference on Neuromorphic Systems 2021}, pages
  1--9, 2021.

\bibitem{alonso2022theoretical}
Nick Alonso, Beren Millidge, Jeffrey Krichmar, and Emre~O Neftci.
\newblock A theoretical framework for inference learning.
\newblock {\em Advances in Neural Information Processing Systems},
  35:37335--37348, 2022.

\bibitem{bartunov2018assessing}
Sergey Bartunov, Adam Santoro, Blake~A Richards, Luke Marris, Geoffrey~E
  Hinton, and Timothy Lillicrap.
\newblock Assessing the scalability of biologically-motivated deep learning
  algorithms and architectures.
\newblock {\em arXiv preprint arXiv:1807.04587}, 2018.

\bibitem{crick1989recent}
Francis Crick.
\newblock The recent excitement about neural networks.
\newblock {\em Nature}, 337(6203):129--132, 1989.

\bibitem{ernoult2022towards}
Maxence Ernoult, Fabrice Normandin, Abhinav Moudgil, Sean Spinney, Eugene
  Belilovsky, Irina Rish, Blake Richards, and Yoshua Bengio.
\newblock Towards scaling difference target propagation by learning backprop
  targets.
\newblock {\em arXiv preprint arXiv:2201.13415}, 2022.

\bibitem{friston2008hierarchical}
Karl Friston.
\newblock Hierarchical models in the brain.
\newblock {\em PLoS computational biology}, 4(11):e1000211, 2008.

\bibitem{guerguiev2017towards}
Jordan Guerguiev, Timothy~P Lillicrap, and Blake~A Richards.
\newblock Towards deep learning with segregated dendrites.
\newblock {\em Elife}, 6:e22901, 2017.

\bibitem{kingma2014adam}
Diederik~P Kingma and Jimmy Ba.
\newblock Adam: A method for stochastic optimization.
\newblock {\em arXiv preprint arXiv:1412.6980}, 2014.

\bibitem{laborieux2021scaling}
Axel Laborieux, Maxence Ernoult, Benjamin Scellier, Yoshua Bengio, Julie
  Grollier, and Damien Querlioz.
\newblock Scaling equilibrium propagation to deep convnets by drastically
  reducing its gradient estimator bias.
\newblock {\em Frontiers in neuroscience}, 15:129, 2021.

\bibitem{liao2016important}
Qianli Liao, Joel Leibo, and Tomaso Poggio.
\newblock How important is weight symmetry in backpropagation?
\newblock In {\em Proceedings of the AAAI Conference on Artificial
  Intelligence}, volume~30, 2016.

\bibitem{lillicrap2016random}
Timothy~P Lillicrap, Daniel Cownden, Douglas~B Tweed, and Colin~J Akerman.
\newblock Random synaptic feedback weights support error backpropagation for
  deep learning.
\newblock {\em Nature communications}, 7(1):1--10, 2016.

\bibitem{lillicrap2020backpropagation}
Timothy~P Lillicrap, Adam Santoro, Luke Marris, Colin~J Akerman, and Geoffrey
  Hinton.
\newblock Backpropagation and the brain.
\newblock {\em Nature Reviews Neuroscience}, 21(6):335--346, 2020.

\bibitem{mcculloch1943logical}
Warren~S McCulloch and Walter Pitts.
\newblock A logical calculus of the ideas immanent in nervous activity.
\newblock {\em The bulletin of mathematical biophysics}, 5:115--133, 1943.

\bibitem{millidge2022backpropagation}
Beren Millidge, Yuhang Song, Tommaso Salvatori, Thomas Lukasiewicz, and Rafal
  Bogacz.
\newblock Backpropagation at the infinitesimal inference limit of energy-based
  models: Unifying predictive coding, equilibrium propagation, and contrastive
  hebbian learning.
\newblock {\em arXiv preprint arXiv:2206.02629}, 2022.

\bibitem{millidge2022theoretical}
Beren Millidge, Yuhang Song, Tommaso Salvatori, Thomas Lukasiewicz, and Rafal
  Bogacz.
\newblock A theoretical framework for inference and learning in predictive
  coding networks.
\newblock {\em arXiv preprint arXiv:2207.12316}, 2022.

\bibitem{millidge2020predictive}
Beren Millidge, Alexander Tschantz, and Christopher~L Buckley.
\newblock Predictive coding approximates backprop along arbitrary computation
  graphs.
\newblock {\em arXiv preprint arXiv:2006.04182}, 2020.

\bibitem{movellan1991contrastive}
Javier~R Movellan.
\newblock Contrastive hebbian learning in the continuous hopfield model.
\newblock In {\em Connectionist models}, pages 10--17. Elsevier, 1991.

\bibitem{neal1998view}
Radford~M Neal and Geoffrey~E Hinton.
\newblock A view of the em algorithm that justifies incremental, sparse, and
  other variants.
\newblock {\em Learning in graphical models}, pages 355--368, 1998.

\bibitem{nesterov2006cubic}
Yurii Nesterov and Boris~T Polyak.
\newblock Cubic regularization of newton method and its global performance.
\newblock {\em Mathematical Programming}, 108(1):177--205, 2006.

\bibitem{parikh2014proximal}
Neal Parikh and Stephen Boyd.
\newblock Proximal algorithms.
\newblock {\em Foundations and Trends in optimization}, 1(3):127--239, 2014.

\bibitem{rao1999predictive}
Rajesh~PN Rao and Dana~H Ballard.
\newblock Predictive coding in the visual cortex: a functional interpretation
  of some extra-classical receptive-field effects.
\newblock {\em Nature neuroscience}, 2(1):79--87, 1999.

\bibitem{richards2019dendritic}
Blake~A Richards and Timothy~P Lillicrap.
\newblock Dendritic solutions to the credit assignment problem.
\newblock {\em Current opinion in neurobiology}, 54:28--36, 2019.

\bibitem{rosenbaum2022relationship}
Robert Rosenbaum.
\newblock On the relationship between predictive coding and backpropagation.
\newblock {\em Plos one}, 17(3):e0266102, 2022.

\bibitem{rumelhart1995backpropagation}
David~E Rumelhart, Richard Durbin, Richard Golden, and Yves Chauvin.
\newblock Backpropagation: The basic theory.
\newblock {\em Backpropagation: Theory, architectures and applications}, pages
  1--34, 1995.

\bibitem{salvatori2022learning}
Tommaso Salvatori, Luca Pinchetti, Beren Millidge, Yuhang Song, Rafal Bogacz,
  and Thomas Lukasiewicz.
\newblock Learning on arbitrary graph topologies via predictive coding.
\newblock {\em arXiv preprint arXiv:2201.13180}, 2022.

\bibitem{salvatori2021associative}
Tommaso Salvatori, Yuhang Song, Yujian Hong, Lei Sha, Simon Frieder, Zhenghua
  Xu, Rafal Bogacz, and Thomas Lukasiewicz.
\newblock Associative memories via predictive coding.
\newblock {\em Advances in Neural Information Processing Systems}, 34, 2021.

\bibitem{salvatori2022incremental}
Tommaso Salvatori, Yuhang Song, Beren Millidge, Zhenghua Xu, Lei Sha, Cornelius
  Emde, Rafal Bogacz, and Thomas Lukasiewicz.
\newblock Incremental predictive coding: A parallel and fully automatic
  learning algorithm.
\newblock {\em arXiv preprint arXiv:2212.00720}, 2022.

\bibitem{scellier2017equilibrium}
Benjamin Scellier and Yoshua Bengio.
\newblock Equilibrium propagation: Bridging the gap between energy-based models
  and backpropagation.
\newblock {\em Frontiers in computational neuroscience}, 11:24, 2017.

\bibitem{song2020can}
Yuhang Song, Thomas Lukasiewicz, Zhenghua Xu, and Rafal Bogacz.
\newblock Can the brain do backpropagation? exact implementation of
  backpropagation in predictive coding networks.
\newblock {\em Advances in neural information processing systems}, 33:22566,
  2020.

\bibitem{song2022inferring}
Yuhang Song, Beren~Gray Millidge, Tommaso Salvatori, Thomas Lukasiewicz,
  Zhenghua Xu, and Rafal Bogacz.
\newblock Inferring neural activity before plasticity: A foundation for
  learning beyond backpropagation.
\newblock {\em bioRxiv}, 2022.

\bibitem{toulis2014implicit}
Panos Toulis and Edoardo~M Airoldi.
\newblock Implicit stochastic gradient descent for principled estimation with
  large datasets.
\newblock {\em ArXiv e-prints}, 2014.

\bibitem{toulis2016stochastic}
Panos Toulis and Edoardo~M Airoldi.
\newblock Stochastic gradient methods for principled estimation with large
  datasets., 2016.

\bibitem{whittington2017approximation}
James~CR Whittington and Rafal Bogacz.
\newblock An approximation of the error backpropagation algorithm in a
  predictive coding network with local hebbian synaptic plasticity.
\newblock {\em Neural computation}, 29(5):1229--1262, 2017.

\bibitem{whittington2019theories}
James~CR Whittington and Rafal Bogacz.
\newblock Theories of error back-propagation in the brain.
\newblock {\em Trends in cognitive sciences}, 23(3):235--250, 2019.

\end{thebibliography}


\appendix

\section{Appendix}

\subsection{Derivation of Output Layer Update}\label{app:derivOut}
The prediction error at the output layer is and $e_L = \hat{h}_L - p_L$. Let $\mathcal{L}(y, \hat{h}_L) = \frac12 \Vert y - \hat{h}_L \Vert^2$, which implies
$-\frac{\partial \mathcal{L}(y, \hat{h}_L)}{\partial \hat{h}_L} = y - \hat{h}$, where $y$ is the output target.  To get the output layer update, we take the gradient of $-\frac{\partial F}{\partial \hat{h}_L}$ set to zero and solve for $\hat{h}_L$:
\begin{equation}\label{eq:outDerive}
\begin{split}
-\frac{\partial F}{\partial \hat{h}_L} = 0 &= -\gamma_L e_L -\frac{\partial \mathcal{L}(y, \hat{h}_L)}{\partial \hat{h}_L} \\
\hat{h}_L - y &= -\gamma_L (\hat{h}_L - p_L)  \\
(1 + \gamma_L) \hat{h}_L &= y + \gamma_L p_L \\
\hat{h}_L &= \frac{1}{(1 + \gamma_L)} y + \frac{\gamma_L}{(1 + \gamma_L)} p_L \\
\end{split}
\end{equation}

\subsection{Theorem 1 Proof, Lemmas, and Preliminaries}\label{app:thrm1proof}
\subsubsection{Preliminaries}
The proof of theorem \ref{thrm:impSGD=IL} rests on one key insight, which is that the parameters, $\theta$, after an IL update is performed at iteration $b$ can be defined as a function of neuron activities, i.e., $\theta = j(\hat{h})$, since the IL update is a function of neuron activities. In particular, we can define $\Delta \theta^{(b)} = g(\hat{h})$, where $\Delta \theta^{(b)}$ and $g(\hat{h})$ are the set of IL updates (equation \ref{eq:LMSUpdate}) at iteration $b$: 
\begin{equation}
\Delta \theta^{(b)} = g(\hat{h}) = [\Delta W_0, \Delta W_1,..., \Delta W_{L-1}] = [\alpha_0 e_{1} f(\hat{h}_0)^\top, \alpha_1 e_{2} f(\hat{h}_1)^\top,..., \alpha_{L-1} e_{L} f(\hat{h}_{L-1})^\top],
\end{equation}
where errors are just another function of $\hat{h}$: $e_l = \hat{h}_l - W_{l-1}f (\hat{h}_{l-1})$. Our updated parameters can then be defined as $\theta = j(\hat{h}) = \theta^{(b)} + g(\hat{h})$. Note that the range of function $j(\hat{h})$ covers all real valued vectors since $\hat{h}_l$ values may in principle be any real-valued vector, and therefore $g(\hat{h})$ and $j(\hat{h})$ output any real valued vector (see \cite{alonso2022theoretical} theorem 4.2).

This insight allows us to express the proximal update as an optimization procedure over hidden/auxiliary variable $\hat{h}$, instead of over parameters $\theta$. In particular, let $Prox(\theta) = \mathcal{L}(\theta) + \frac{1}{2\beta}\Vert\theta - \theta^{(b)} \Vert^2 = \mathcal{L}(\theta) + \frac{1}{2\beta}\Vert \Delta \theta \Vert^2$. We now have
\begin{equation}\label{implicitProx}
\begin{split}
Prox(\theta) = Prox(j(\hat{h}), \hat{h}) &= \mathcal{L}(j(\hat{h})) + \frac{1}{2\beta}\Vert j(\hat{h}) - \theta^{(b)} \Vert^2\\
&= \mathcal{L}(j(\hat{h})) + \frac{1}{2\beta}\Vert g(\hat{h}) \Vert^2.
\end{split}
\end{equation}
The proximal update now becomes
\begin{equation}\label{eq:implicitProxUpdate}
\theta^{(b+1)}_{prox} = \argmin_{j(\hat{h})} Prox(j(\hat{h}), \hat{h}) = j (\argmin_{\hat{h}} Prox(j(\hat{h}), \hat{h})),
\end{equation}
where the equality on the right follows from the fact $j(\hat{h})$ is a function optimized through changes to $\hat{h}$. The IL update can be rewritten using the same notation:
\begin{equation}\label{eq:implicitILUpdate}
\theta^{(b+1)}_{IL} = j (\argmin_{\hat{h}} F(\theta^{(b)}, \hat{h})).
\end{equation}

The proof for theorem 1 now follows from two lemmas:

\begin{lemma}
Under the assumptions of theorem \ref{thrm:impSGD=IL}, it is the case $\frac{\partial Prox((j(\hat{h}), \hat{h}))}{\partial \hat{h}} \propto \frac{\partial F(\theta^{(b)}, \hat{h})}{\partial \hat{h}}$.
\end{lemma}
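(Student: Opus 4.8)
The plan is to compute both gradients $\partial Prox/\partial\hat{h}$ and $\partial F/\partial\hat{h}$ directly, layer by layer, and show that each corresponding block differs only by a common positive scalar (or, more precisely, that the two gradient vectors are proportional once the prescribed $\gamma$ settings are substituted). The starting point is the rewritten proximal objective from equation \eqref{implicitProx}, $Prox(j(\hat{h}),\hat{h}) = \mathcal{L}(j(\hat{h})) + \frac{1}{2\beta}\Vert g(\hat{h})\Vert^2$, where $g(\hat{h})$ is the concatenation of the outer-product updates $\alpha_l e_{l+1} f(\hat{h}_l)^\top$. First I would differentiate the penalty term: since $\Vert g(\hat{h})\Vert^2 = \sum_{l} \alpha_l^2 \Vert e_{l+1}\Vert^2 \Vert f(\hat{h}_l)\Vert^2$ (the squared Frobenius norm of an outer product factors), its derivative with respect to a given $\hat{h}_l$ picks up two contributions — one from the factor $\Vert e_{l+1}\Vert^2 \Vert f(\hat{h}_l)\Vert^2$ appearing in the $W_l$ update (giving a term proportional to $\alpha_l^2 \Vert e_{l+1}\Vert^2 f'(\hat{h}_l)\hat{h}_l$, i.e. exactly the shape of the $\gamma^{decay}_l$ term in $F$) and one from $\Vert e_l\Vert^2 \Vert f(\hat{h}_{l-1})\Vert^2$ appearing in the $W_{l-1}$ update (giving a term proportional to $\alpha_{l-1}^2 \Vert f(\hat{h}_{l-1})\Vert^2 e_l$, i.e. exactly the shape of the $\gamma_l$ term). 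This is precisely why the $\gamma$ assignments $\gamma_l = \alpha_{l-1}^2\Vert f(\hat{h}_{l-1})\Vert^2$ and $\gamma^{decay}_l = \alpha_l^2\Vert e_{l+1}\Vert^2$ were chosen.

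Next I would handle the loss term $\mathcal{L}(j(\hat{h}))$. Here the chain rule runs $\partial\mathcal{L}/\partial\hat{h} = (\partial j/\partial\hat{h})^\top \partial\mathcal{L}/\partial\theta$, and the dependence of $\theta = \theta^{(b)} + g(\hat{h})$ on $\hat{h}$ is again through the outer-product updates. The key simplification is that $\partial\mathcal{L}/\partial\theta$, evaluated at the updated parameters, can only be nonzero in a way that feeds back through the output layer (the loss depends on $\theta$ only via the network's output on input $x^{(b)}$), so after using the limit $\hat{h}^{(b)}_{L-1}\to h^{(b+1)}_{L-1}$ the loss gradient contribution collapses to something supported essentially at the last hidden layer and the output layer, matching the $\mathcal{L}(y,\hat{h}_L)$ term and the $\gamma_L$-weighted error term in $F$ after substituting $\gamma_L = \alpha_{L-1}(\frac{1}{\beta} + \Vert f(\hat{h}_{L-1})\Vert^2) - 1$. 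I would then assemble the blocks: for each hidden layer the $\partial Prox/\partial\hat{h}_l$ block equals $\frac{1}{\beta}$ times exactly the terms that make up $\partial F/\partial\hat{h}_l$, and at the output layer the same factor $\frac{1}{\beta}$ emerges from how $\gamma_L$ was defined. Hence $\partial Prox/\partial\hat{h} = \frac{1}{\beta}\,\partial F/\partial\hat{h}$, which is the claimed proportionality with constant $\frac{1}{\beta} > 0$.

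The main obstacle I anticipate is the loss-term chain rule and the role of the limit. Differentiating $\mathcal{L}(j(\hat{h}))$ correctly requires being careful that $j$ maps the full activity vector to the full parameter vector through a collection of outer products, so the Jacobian $\partial j/\partial\hat{h}$ is structured but not diagonal, and one must check that the cross-terms it would naively generate either vanish or are absorbed. I expect this is exactly where the hypothesis $\hat{h}^{(b)}_{L-1}\to h^{(b+1)}_{L-1}$ does the work: it forces the prediction $p_L$ computed from $\hat{h}_{L-1}$ to coincide with the post-update feed-forward output, so that the loss gradient with respect to $\hat{h}_{L-1}$ (routed through the $W_{L-1}$ update) lines up with the $\gamma_{L-1}$-weighted error term rather than producing an extra, unmatched contribution. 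I would treat the interior-layer computation as routine (it is a direct, if tedious, application of the product rule to $\Vert e_{l+1}\Vert^2\Vert f(\hat{h}_l)\Vert^2$ plus the observation that $\partial e_{l+1}/\partial\hat{h}_l = -W_l^\top \,\mathrm{diag}(f'(\hat{h}_l))$ and $\partial e_l/\partial\hat{h}_l = I$), and concentrate the written argument on verifying that the $\gamma$ substitutions make the loss and penalty contributions reassemble into $\frac{1}{\beta}F$ up to the $\mathcal{O}$-error hidden in the limit.
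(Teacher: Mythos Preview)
Your plan follows essentially the same route as the paper: split the computation into the penalty block $\frac{1}{2\beta}\Vert g(\hat{h})\Vert^2$ and the loss block, differentiate the penalty layer by layer using the factorization $\Vert e_{l+1}f(\hat{h}_l)^\top\Vert^2 = \Vert e_{l+1}\Vert^2\Vert f(\hat{h}_l)\Vert^2$, and then match the resulting pieces to the $\gamma$-weighted terms in $\partial F/\partial\hat{h}_l$ via the prescribed substitutions. That part is right and is exactly what the paper does (its $prox_1$/$prox_2$ decomposition).

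Two places where your sketch departs from the paper and would need repair. First, your claim that the proportionality constant is a uniform $\tfrac{1}{\beta}$ across all layers is wrong at the output layer. The paper's output-layer derivation factors out $c = \alpha_{L-1}\Vert f(\hat{h}_{L-1})\Vert^2$, not $\tfrac{1}{\beta}$, and the peculiar form $\gamma_L = \alpha_{L-1}(\tfrac{1}{\beta}+\Vert f(\hat{h}_{L-1})\Vert^2)-1$ is exactly what is needed to make the residual match $\gamma_L e_L + \partial\mathcal{L}(y,\hat{h}_L)/\partial\hat{h}_L$ after dividing by $c$. So what one actually gets is \emph{layer-wise} proportionality with possibly different positive scalars per block; that still suffices for Lemma~2 (same zero set), but the single-constant claim is false as stated. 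Second, you propose to chain-rule $\mathcal{L}(j(\hat{h}))$ through the full Jacobian $\partial j/\partial\hat{h}$ at every layer and then argue the interior contributions ``collapse'' under the limit. The paper does not do this: it explicitly declares that, as in PC, it takes only the partial gradient of terms that \emph{explicitly} contain $\hat{h}_l$, and hence ignores the loss gradient at hidden layers altogether. The limit $\hat{h}_{L-1}\to h_{L-1}^{(b+1)}$ is invoked solely at the output layer, to identify $h_L^{(b+1)}$ with $(1-c)p_L + c\hat{h}_L$ so that the loss gradient at $\hat{h}_L$ can be written in closed form. Your more ambitious route is not obviously wrong, but you have not shown that the hidden-layer loss contributions vanish, and the limit as stated does not directly give you that; if you pursue it you will need an additional argument (or simply adopt the paper's ``local partial gradient'' convention).
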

The proof is shown in subsections \ref{app:lemma1P1} and \ref{app:lemma1P2}.

\begin{lemma}
If $\frac{\partial Prox((j(\hat{h}), \hat{h}))}{\partial \hat{h}} \propto \frac{\partial F(\theta^{(b)}, \hat{h})}{\partial \hat{h}}$, then $\argmin_{\hat{h}} Prox(j(\hat{h}), \hat{h}) = \argmin_{\hat{h}} F(\theta^{(b)}, \hat{h})$
\end{lemma}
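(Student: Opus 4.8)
The plan is to read the hypothesis $\frac{\partial Prox((j(\hat h),\hat h))}{\partial \hat h} \propto \frac{\partial F(\theta^{(b)},\hat h)}{\partial \hat h}$ as asserting that the two gradients are, at every point, positive scalar multiples of one another, and to note (from the preliminaries) that $\hat h$ ranges over all real-valued vectors, so both $Prox(j(\hat h),\hat h)$ and $F(\theta^{(b)},\hat h)$ are defined on the connected, simply connected domain $\mathbb{R}^n$. Write the hypothesis as $\nabla_{\hat h}\,Prox(j(\hat h),\hat h) = c\,\nabla_{\hat h}\,F(\theta^{(b)},\hat h)$ with $c>0$. In the case that $c$ is a positive constant the proof is short: the vector field $\nabla_{\hat h}[\,Prox - cF\,]$ vanishes identically, so by the gradient theorem on $\mathbb{R}^n$ we get $Prox(j(\hat h),\hat h) = c\,F(\theta^{(b)},\hat h) + k$ for a constant $k$, and since adding a constant and scaling by a positive number leave the minimizing set unchanged, $\argmin_{\hat h} Prox(j(\hat h),\hat h) = \argmin_{\hat h}\bigl(cF(\theta^{(b)},\hat h)+k\bigr) = \argmin_{\hat h} F(\theta^{(b)},\hat h)$.

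The genuinely delicate case is when the factor coming out of Lemma 1 is a positive \emph{function} $c(\hat h)$ rather than a constant — a priori possible, since the $\gamma$ terms are re-set at each inference iteration and so the proportionality factor may depend on $\hat h$. Here one cannot integrate, and I would instead argue in three steps: (i) because $c(\hat h)>0$, the zero sets of $\nabla_{\hat h} Prox$ and $\nabla_{\hat h} F$ coincide, so the two objectives have exactly the same stationary points; (ii) both objectives are bounded below and coercive — $F$ is a loss plus a sum of squared norms, and $Prox$ is a loss plus a quadratic proximity penalty — so each attains a minimum, every minimizer has zero gradient, and hence every minimizer lies in the common stationary set; (iii) to identify which stationary point the inference phase actually selects, observe that PC's update $\Delta\hat h = -\epsilon\,\partial F/\partial\hat h$ and the analogous descent on $Prox$ have parallel step directions at every point, so their gradient-flow trajectories issued from the common initialization $\hat h = h$ coincide as point sets (only the speed of traversal differs); therefore the two inference processes converge to the same $\hat h^\star$, giving $\argmin_{\hat h} Prox = \hat h^\star = \argmin_{\hat h} F$ in the sense relevant to the algorithm.

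The main obstacle is precisely the non-convexity of $F$ in $\hat h$ induced by the nonlinearity $f$: "the" argmin need not be unique, so the cleanest rigorous reading of the statement is the operational one — the point reached by the inference optimization — and the trajectory-coincidence argument of step (iii) is what licenses that reading. A secondary, more mechanical point is to confirm, from the explicit $\gamma$ assignments in Theorem~\ref{thrm:impSGD=IL} as used in Lemma 1, that the proportionality factor is strictly positive along the entire inference trajectory, since terms like $\alpha_l^2\|f(\hat h_{l-1})\|^2$ and the output-layer coefficient are only nonnegative in general; a degenerate configuration such as $f(\hat h_{l-1})=0$ would need to be handled or excluded.
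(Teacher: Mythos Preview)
Your step (i) — proportional gradients share zero sets, hence the two objectives have the same stationary points — \emph{is} the paper's entire proof; the paper then simply asserts ``including minima'' without further justification. Everything else you do (the integration argument for constant $c$, the coercivity check, and the gradient-flow trajectory coincidence for variable $c(\hat h)$) goes beyond what the paper supplies. In particular, the paper does not distinguish the constant-factor from the variable-factor case, does not discuss non-convexity or non-uniqueness of the argmin, and does not invoke any trajectory argument; it treats the implication ``same stationary points $\Rightarrow$ same minima'' as immediate.

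So your proposal is not wrong — if anything it is more scrupulous than the source — but it is substantially more elaborate than what the paper actually proves. If your aim is to match the paper, your step (i) alone suffices; if your aim is a rigorous standalone argument, then your concerns about the variable factor (which, as you note, is $c=\alpha_{L-1}\Vert f(\hat h_{L-1})\Vert^2$ at the output layer and hence genuinely $\hat h$-dependent) and about identifying which stationary point is the operative minimum are legitimate refinements the paper simply does not engage with.
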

\begin{proof}
If the gradients of $Prox$ and  $F$ w.r.t. $\hat{h}$ are proportional they will be zero for the same values of $\hat{h}$ and therefore share the same stationary points, including minima.
\end{proof}

\subsubsection{Proof of Theorem 1}
The proof for Theorem 1 is given as follows:
\begin{proof}
Lemma 1 and 2 imply $\argmin_{\hat{h}} Prox(j(\hat{h}), \hat{h}) = \argmin_{\hat{h}} F(\theta^{(b)}, \hat{h})$. It follows trivially from this that $j(\argmin_{\hat{h}} Prox(j(\hat{h}), \hat{h})) = j(\argmin_{\hat{h}} F(\theta^{(b)}, \hat{h}))$, and therefore $\theta^{(b+1)}_{prox} = \theta^{(b+1)}_{IL}$.
\end{proof}

\subsubsection{Proof Lemma 1 for Hidden Layer Updates}\label{app:lemma1P1}

\begin{proof}
 As with the minimization of F in PC networks, we consider the partial gradient of the terms of proximal objective using the terms that explicitly include $\hat{h}_l$. Since the loss term $\mathcal{L}(\hat{h})$ in $Prox(\theta)$ is only an explicit function of the output layer activities $\hat{h}_{L}$, we ignore the gradient of the loss with respect to hidden layers. This means that hidden layers only minimize the regularization term $\frac{1}{2\beta}\Vert \Delta \theta \Vert^2$. Given the notation from the previous section, we can express the minimization of the proximal regularization term as follows
\begin{equation}\label{eq:ProxReg}
\argmin_{\theta} \frac{1}{2 \beta}\Vert \Delta \theta \Vert^2 = \argmin_{j(\hat{h})} \frac{1}{2 \beta}\Vert g(\hat{h}) \Vert^2 = \argmin_{j(\hat{h})} \sum_{l=0}^{L-1} \frac{1}{2 \beta}\Vert \Delta W_l^{(b)} \Vert^2,
\end{equation}
where $\beta$ is a 'global' learning rate, not to be confused with matrix-wise step sizes $\alpha_l$. Let the function $Prox(\hat{h}_l$) be the components of equation \ref{eq:ProxReg} that explicitly involve $\hat{h}_l$, which is
\begin{equation}\label{eq:proxHid}
    Prox(\hat{h}_l) = \underbrace{\frac{1}{2\beta} \Vert \Delta W_{l-1} \Vert^2}_{prox_1} + \underbrace{\frac{1}{2\beta} \Vert \Delta W_{l} \Vert^2}_{prox_2}.
\end{equation}

Now we compute the gradient of $Prox(\hat{h}_l)$ w.r.t. $\hat{h}_l$ and show it is proportional to the gradient of F w.r.t. $\hat{h}_l$. The gradient can be expressed in terms of $prox_1$ and $prox_2$: $\frac{\partial Prox(\hat{h}_l)}{\partial \hat{h}_l} = \frac{\partial prox_1(\hat{h}_l)}{\partial \hat{h}_l} + \frac{\partial prox_2(\hat{h}_l)}{\partial \hat{h}_l}$. We first compute the gradient of $prox_1$.
\begin{equation}
    prox_1 = \frac{1}{2\beta}\Vert \Delta W_{l-1} \Vert^2 = \frac{1}{2\beta} \Vert  \alpha_{l-1} e_l f(\hat{h}_{l-1})^T \Vert^2,
\end{equation}

where $p_l = W_{l-1} f(\hat{h}_{l-1})$ and $e_l = \hat{h}_l - p_{l}$. Using the chain rule 

\begin{equation}
    \frac{\partial prox_1(\hat{h}_l)}{\partial \hat{h}_l} = \frac{\partial prox_1}{\partial \Delta W_{l-1}} \frac{\partial \Delta W_{l-1}}{\partial e_{l}} \frac{\partial e_{l}}{\partial \hat{h}_{l}},
\end{equation}

where $\frac{\partial prox_1}{\partial \Delta W_{l-1}} = \frac{\alpha_{l-1}}{\beta} e_l f(\hat{h}_{l-1})^T$ and $\frac{\partial \Delta W_{l-1}}{\partial e_l} \frac{\partial e_{l}}{\partial \hat{h}_{l}} = \alpha_{l-1} f(\hat{h}_{l-1})$. Multiplying together we get

\begin{equation}
    \frac{\partial prox_1(\hat{h}_l)}{\partial \hat{h}_l} = \frac{\alpha_{l-1}^2}{\beta} \Vert f(\hat{h}_{l-1}) \Vert^2 e_l.
\end{equation}

Now we derive $\frac{prox_2(\hat{h}_l)}{\hat{h}_l}$:

\begin{equation}
    prox_2 = \frac{1}{2\beta} \Vert \Delta W_{l} \Vert^2 = \frac{1}{2\beta} \Vert  \alpha_{l} e_{l+1}f(\hat{h}_{l})^T \Vert^2,
\end{equation}

where $p_{l+1} = W_{l} f(\hat{h}_{l})$ and $e_{l+1} = \hat{h}_{l+1} - p_{l+1}$. Using the chain rule

\begin{equation}
    \frac{\partial prox_2}{\partial \hat{h}_l} = \frac{\partial prox_2}{\partial \Delta W_{l}} \frac{\partial \Delta W_{l}}{\partial e_{l+1}} \frac{\partial e_{l+1}}{\partial p_{l+1}} \frac{\partial p_l}{\partial \hat{h}_{l}}  + \frac{\partial prox_2}{\partial \Delta W_{l}} \frac{\partial \Delta W_{l}}{\partial f(\hat{h}_{l})^T} \frac{\partial f(\hat{h}_l)^T}{\partial \hat{h}_{l}}.
\end{equation}

The term on the left is composed of the following gradients: $\frac{\partial prox_2}{\partial \Delta W_{l}} =  \frac{\alpha_{l}}{\beta} e_{l+1} f(\hat{h})^T_{l},  \frac{\partial \Delta W_{l}}{\partial e_{l+1}} = \alpha_{l} f(\hat{h}_l)$, and $\frac{\partial e_{l+1}}{\partial p_{l+1}}\frac{\partial p_l}{\partial \hat{h}_{l}}= -f'(\hat{h}_l)W_l^T$. The term on the right comes out to $\frac{\partial prox_2}{\partial \Delta W_{l}} \frac{\partial \Delta W_{l}}{\partial f(\hat{h})^T_{l}}\frac{\partial f(\hat{h}_l)^T}{\partial f(\hat{h})_l} = \alpha_{l-1} (e_{l+1}^T e_{l+1}) f'(\hat{h}_l) \hat{h}_l$, where $f'(\hat{h}_l)$ is gradient of $f$. Substituting these terms back into the original equation, we get

\begin{equation}
\begin{split}
    \frac{\partial prox_2}{\partial \hat{h}_l} &= -\frac{\alpha_{l}^2}{\beta} f'(\hat{h}_l) W_l^T e_{l+1} f(\hat{h}_l)^T f(\hat{h}_l) + \frac{\alpha_{l}^2}{\beta} (e_{l+1}^T e_{l+1}) f'(\hat{h}_l) \hat{h}_l\\
    &= - \frac{\alpha_{l}^2}{\beta} \Vert f(\hat{h}_l) \Vert^{2} f'(\hat{h}_l) W_l^T e_{l+1} + \frac{\alpha_{l}^2}{\beta} \Vert e_{l+1} \Vert^2 f'(\hat{h}_l) \hat{h}_l.
\end{split}
\end{equation}

Now we combine the gradients for $prox1$ and $prox2$ and use the assumptions of the theorem, which are $\gamma_{l+1} = \alpha_{l}^2 \Vert f(\hat{h}_l) \Vert^{2}$, $\gamma_l = \alpha^2_{l-1} \Vert f(\hat{h}_{l-1}) \Vert^2$, and $\gamma^{decay}_l = \alpha_{l}^2 \Vert e_{l+1} \Vert^2$:
\begin{equation}
\begin{split}
\frac{\partial Prox(\hat{h}_l)}{\partial \hat{h}_l} &= - \frac{\alpha_{l}^2}{\beta} \Vert f(\hat{h}_l) \Vert^{2} f'(\hat{h}_l) W_l^T e_{l+1} + \frac{\alpha^2_{l-1}}{\beta} \Vert f(\hat{h}_{l-1}) \Vert^2 e_l + \frac{\alpha_{l}^2}{\beta} \Vert e_{l+1} \Vert^2 f'(\hat{h}_l) \hat{h}_l\\
&\propto - \alpha_{l}^2 \Vert f(\hat{h}_l) \Vert^{2} f'(\hat{h}_l) W_l^T e_{l+1} + \alpha^2_{l-1} \Vert f(\hat{h}_{l-1}) \Vert^2 e_l + \alpha_{l}^2 \Vert e_{l+1} \Vert^2 f'(\hat{h}_l) \hat{h}_l\\
&= - \gamma_{l+1} f'(\hat{h}_l) W_l^T e_{l+1} + \gamma_l e_l + \gamma^{decay}_l f'(\hat{h}_l) \hat{h}_l\\
&= \frac{\partial F}{\partial \hat{h}_l}
\end{split}
\end{equation}
\end{proof}

\subsubsection{Proof of Lemma 1 for Output Layer Updates}\label{app:lemma1P2}
\begin{proof}
Let $\hat{h}_L^{IL}$ be the output layer value updated according to equation \ref{eq:outUpdate} and $\hat{h}_L^{prox}$ be the output layer activity updated according to the gradients of the proximal objective. Here we show $\hat{h}_L^{IL} = \hat{h}_L^{prox}$ in the limit where $\hat{h}_{L-1} \rightarrow h_{L-1}^{(b+1)}$ and $\gamma_L = \alpha_{L-1} ( \frac{1}{\beta} + \Vert f(\hat{h}_{L-1})\Vert^2) - 1$. 

The terms in $Prox(\theta)$ that explicitly contain $\hat{h}_L$ are $\mathcal{L}(j(\hat{h}, \hat{h}))$ and $\frac{1}{2\beta} \Vert \Delta W_{L-1} \Vert^2$. We know from the previous section that
\begin{equation}
    \frac{\partial \frac{1}{2\beta} \Vert \Delta W_{L-1} \Vert^2}{\partial \hat{h}_L} = \frac{\alpha_{L-1}^2}{\beta} \Vert f(\hat{h}_{L-1}) \Vert^2 e_L = \frac{c\alpha_{L-1}}{\beta} e_L,
\end{equation}
where $c = \alpha_{L-1} \Vert f(\hat{h}_{L-1}^{(b)})\Vert^2$. The loss $\mathcal{L}(j(\hat{h}), \hat{h}) = \frac12 \Vert y^{(b)} - h_L^{(b+1)} \Vert^2$, where $h_L^{(b+1)}$ are the feed-forward activities after the weight updates are applied. We first derive an expression for $h_L^{(b+1)}$ in the limit where $\hat{h}_{L-1} \rightarrow h_{L-1}^{(b+1)}$ then compute the gradient of the loss.
\begin{equation}
\begin{split}
\lim_{\hat{h}_{L-1} \rightarrow h_{L-1}^{(b+1)}} h^{(b+1)}_{L} = W^{(b+1)}_{L-1} f(\hat{h}_{L-1}^{(b)}) &= (W^{(b)}_{L-1} + \Delta W^{(b)}_{L-1}) f(\hat{h}_{L-1}^{(b)})\\
&= W^{(b)}_{L-1}f(\hat{h}_{L-1}^{(b)}) + \alpha_{L-1} f(\hat{h}_{L-1}^{(b)})^{\top} f(\hat{h}_{L-1}^{(b)}) e^{(b)}_{L}\\
&= W^{(b)}_{L-1}f(\hat{h}_{L-1}^{(b)}) + c \hat{h}_L^{(b)} - c W^{(b)}_{L-1}f(\hat{h}_{L-1}^{(b)})\\
&= (1 - c) W^{(b)}_{L-1}f(\hat{h}_{L-1}^{(b)}) + c \hat{h}_L^{(b)}\\
&= (1 - c) p^{(b)}_{L} + c \hat{h}_L^{(b)},
\end{split}
\end{equation}
where $c = \alpha_{L-1} \Vert f(\hat{h}_{L-1}^{(b)})\Vert^2$. The second line follows from the assumption $\hat{h}_{L-1} \rightarrow h_{L-1}^{(b+1)}$. Now we can compute the gradient of the loss:
\begin{equation}
\begin{split}
\frac{\partial \mathcal{L}(y^{(b)}, h^{(b+1)}_L)}{\partial \hat{h}_{L}^{(b)}} &= \frac{\partial \mathcal{L}(y^{(b)}, h^{(b+1)}_L)}{\partial h^{(b+1)}} \frac{\partial h^{(b+1)}_L}{\partial \hat{h}_{L}^{(b)}}\\
&= c \frac{\partial \mathcal{L}(y^{(b)}, h^{(b+1)}_L)}{\partial h^{(b+1)}}\\
&= c (((1 - c) p_L^{(b)} + c\hat{h}_L^{(b)}) - y)\\
\end{split}
\end{equation}

Under the assumption $\gamma_L  = \frac{\alpha_L}{\beta} + c - 1 = \alpha_{L-1} ( \frac{1}{\beta} + \Vert f(\hat{h}_{L-1})\Vert^2) - 1$ we have
\begin{equation}
\begin{split}
\frac{\partial Prox(\hat{h}_L)}{\partial \hat{h}_L} &= \frac{c \alpha_{L-1}}{\beta} e_L + c (((1 - c) p_L^{(b)} + c\hat{h}_L^{(b)}) - y)\\
&\propto \frac{\alpha_{L-1}}{\beta}  e_L + (1 - c) p_L^{(b)} + c\hat{h}_L^{(b)} - y\\
&= \frac{\alpha_{L-1}}{\beta}  \hat{h}_L - \frac{\alpha_{L-1}}{\beta} p_L + (1 - c) p_L^{(b)} + c\hat{h}_L^{(b)} - y\\
&= (\frac{\alpha_{L-1}}{\beta} - c)  \hat{h}_L - (\frac{\alpha_{L-1}}{\beta} + c - 1) p_L^{(b)} - y\\
&= (\frac{\alpha_{L-1}}{\beta} - c - 1)  (\hat{h}_L - p_L^{(b)}) + \hat{h}_L - y\\
&= \gamma_L e_L + \frac{\partial \mathcal{L}(y^{(b)}, \hat{h}_L)}{\partial \hat{h}_L}\\
&= \frac{\partial F}{\partial \hat{h}_L}.
\end{split}
\end{equation}

\end{proof}

\subsection{Extending to Mini-batches}\label{app:minib}
Theorem \ref{thrm:impSGD=IL} considers the case where mini-batch size 1 is used. We chose this condition because 1) it simplifies the mathematics of the proof, 2) it more realistically describes the learning scenario the brain faces (see appendix of \cite{alonso2022theoretical}), and 3) because the extension of theorem 1 to larger mini-batches is straight-forward. In particular, let $\Delta \theta_{prox}^{(b)} = \theta^{(b+1), n}_{prox} - \theta^{(b)}$, be the proximal/implicit gradient update for the n-th data point in the mini-batch at iteration $b$. As long as the same $\gamma$ setting and assumptions about the learning rate in theorem 1 hold for each data point in the mini-batch, it follows trivially that
\begin{equation}
\frac{1}{N} \sum_{n=0}^N\Delta \theta_{IL}^{(b),n} = \frac{1}{N}\sum_{n=0}^N\Delta \theta_{prox}^{(b),n}.
\end{equation}
As long as the assumption of theorem 1 holds for each individual data point in the mini-batch, then mini-batching with IL is equivalent to computing the proximal update/implicit gradient for each individual data point in the mini-batch independently and then averaging over the updates/gradients. Thus, IL is equivalent to a kind of stochastic version of the proximal algorithm, where the proximal minimization problem is solved over individual data-points (then the solutions are averaged) rather than over entire (mini-)batches. This is why we claim IL approximates \textit{stochastic} implicit gradient descent \cite{toulis2014implicit, toulis2016stochastic}, rather than the batch/non-stochastic version.

Theorem \ref{thrm:2ndOrder} extends to the mini-batch case in a similar way. In particular, for small $\beta$ values, IL would approximate the process of computing the regularized Newton-Raphson update for each individual data point in the mini-batch and averaging those updates together.

\subsection{Proof of Theorem 2}\label{app:proofThrm2}
\begin{proof}

To prove this theorem we perform a Taylor expansion of the gradient around point $\theta^{(b)}$ to get an approximation of the implicit gradient. In particular, we approximate the function $f(\theta^{(b+1)}) = f(\theta^{(b)} + \Delta \theta_{imp}) =  -\frac{\partial \mathcal{L}(\theta^{(b+1)})}{\partial \theta^{(b+1)}}$, where $\Delta  \theta_{imp}$ is the change to parameters after an implicit SGD/proximal update, which is just $- \beta \frac{\partial \mathcal{L}(\theta^{(b+1)})}{\partial \theta^{(b+1)}}$. Let the current training iteration be iteration $b$ and $H = \frac{\partial^2 \mathcal{L}(\theta^{(b)})}{\partial \theta^{(b)2}}$. We have
\begin{equation}
\begin{split}
f(\theta^{(b)} + \Delta \theta_{imp}) &= f(\theta^{(b)}) + \frac{\partial f(\theta^{(b)})}{\partial \theta^{(b)}} \Delta\theta_{imp} + \mathcal{O}(\beta^2)\\ 
-\frac{\partial \mathcal{L}(\theta^{(b+1)})}{\partial \theta^{(b+1)}} &= - \frac{\partial \mathcal{L}(\theta^{(b)})}{\partial \theta^{(b)}} - H \beta \frac{\partial \mathcal{L}(\theta^{(b+1)})}{\partial \theta^{(b+1)}} + \mathcal{O}(\beta^2)\\
-\frac{\partial \mathcal{L}(\theta^{(b+1)})}{\partial \theta^{(b+1)}}  + \beta H \frac{\partial \mathcal{L}(\theta^{(b+1)})}{\partial \theta^{(b+1)}} &= -  \frac{\partial \mathcal{L}(\theta^{(b)})}{\partial \theta^{(b)}} + \mathcal{O}(\beta^2)\\
- (I + \beta H) \frac{\partial \mathcal{L}(\theta^{(b+1)})}{\partial \theta^{(b+1)}} &= - \frac{\partial \mathcal{L}(\theta^{(b)})}{\partial \theta^{(b)}} + \mathcal{O}(\beta^2)\\
- \frac{\partial \mathcal{L}(\theta^{(b+1)})}{\partial \theta^{(b+1)}} &= - (I + \beta H)^{-1} \frac{\partial \mathcal{L}(\theta^{(b)})}{\partial \theta^{(b)}} + \mathcal{O}(\beta^2)\\
\end{split}
\end{equation}
From this and the assumption $\Delta \theta^{(b)}_{IL} = -\beta \frac{\partial \mathcal{L}(\theta^{(b+1)})}{\partial \theta^{(b+1)}}$, it follows that $\Delta \theta^{(b)}_{IL} = - \beta \frac{\partial \mathcal{L}(\theta^{(b+1)})}{\partial \theta^{(b+1)}} \approx - (I + \beta H)^{-1} \beta \frac{\partial \mathcal{L}(\theta^{(b)})}{\partial \theta^{(b)}}$, with error $\mathcal{O}(\beta^2)$.
\end{proof}

\subsection{Methods}\label{app:methods}
Below we outline details of each experiment with the intent of making the experiments replicable. Code for all experiments can be found at \url{https://github.com/nalonso2/PredictiveCoding-MQSeqIL/tree/main}.  All simulations were run on the same laptop with a small GPU type NVIDIA GeForce RTX 270 with MAXQ design.

\begin{figure}[h]
\includegraphics[width=.98\textwidth]{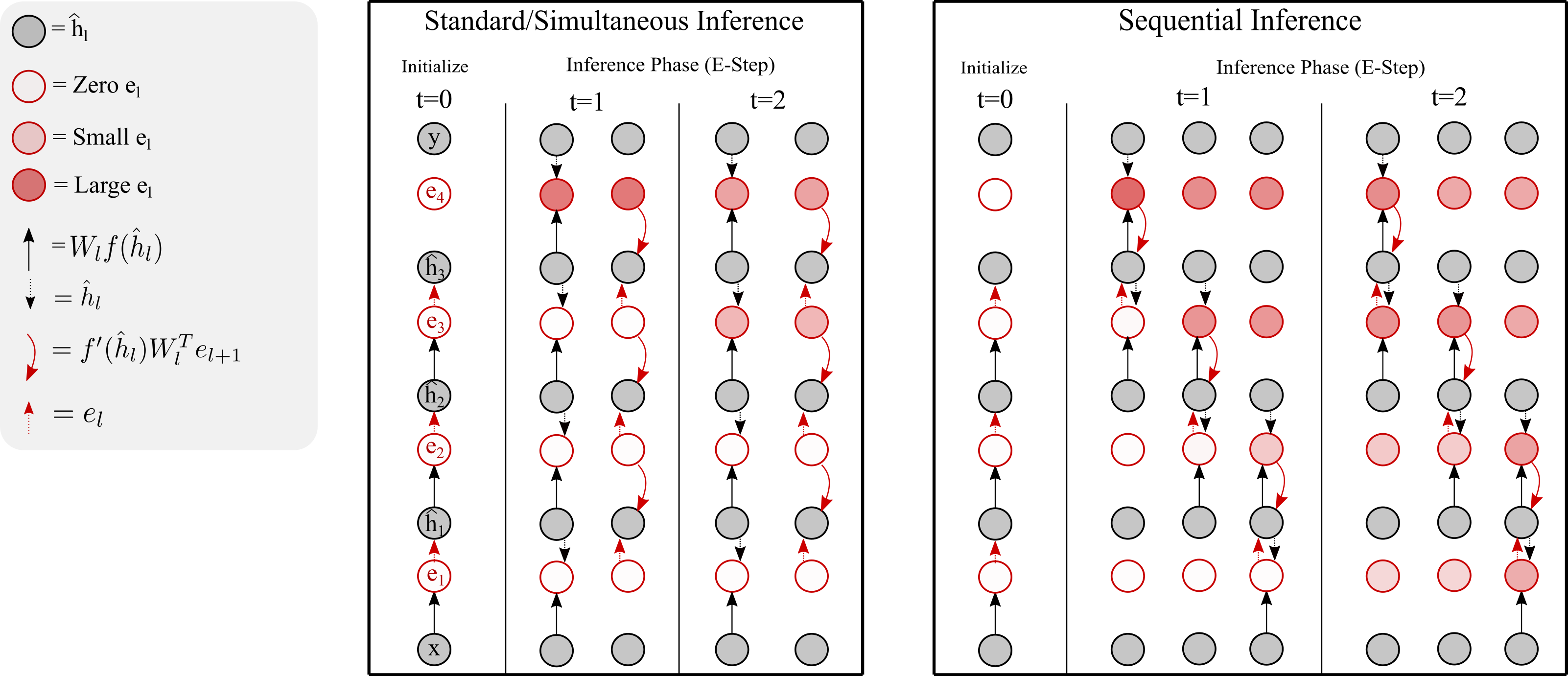}
\caption{Description of the standard/simultaneous inference versus sequential inference method. Initialization (t=0) and the first two inference steps (t=1 and t=2) are shown. Gray nodes are activities $\hat{h}_l$. Red nodes are errors $e_l$. At initialization input layer is clamped to $x$, and we assume the fully clamped scenario where the output layer, top nodes, are clamped to the target: $\hat{h}_L = y$. Hidden layers are initialized to their feed forward activities: $\hat{h}_l = h_l$. Each inference iteration, simultaneous inference begins by computing and storing errors at all layers (left column under t=1 and t=2), then uses the stored errors to update activities (right column under t=1 and t=2). We can see that this leads to a delay in the way errors are propagated through the network. Sequential inference begins by computing errors $e_4$ and $e_3$, then uses those to update $\hat{h}_3$ (left column under t=1 and t=2). Then it recomputes error $e_3$, which will now be non-zero, and $e_2$, then uses those to update $\hat{h}_2$. This process repeats until all activities are updated. We can see sequential inference propagates errors through the entire network in a single iteration.}
\centering
\label{fig:simVseqDiagram}
\end{figure}

\textbf{Details on Implementation of IL} High level summaries of sequential and standard IL are given in algorithms \ref{alg:1} and \ref{alg:2} and in figure \ref{fig:simVseqDiagram}. We provide a more detailed description of the implementation of IL and SeqIL in algorithms \ref{alg:ILFull} and \ref{alg:SeqILFull}. In the simulations and figure, we fully clamp the output layer and set $\gamma_l = 1$ and $\gamma_l^{decay} = 0$, which is common in practice.
\begin{minipage}{0.48\textwidth}
\begin{algorithm}[H]
\SetAlgoLined
\DontPrintSemicolon
\Begin{
    \tcp{Initialize: $\hat{h} = h$}
    \For{$t=1$ \KwTo $T$}{
        \tcp{Compute and store errors at layers 1 to L}
        \tcp{Update $\hat{h}$ at layers 1 to L using stored errors}
    }
}
\caption{Simultaneous Inference}
\label{alg:1}
\end{algorithm}
\end{minipage}
\hfill
\begin{minipage}{0.49\textwidth}
\begin{algorithm}[H]
\SetAlgoLined
\DontPrintSemicolon
\Begin{
\tcp{Initialize: $\hat{h} = h$}
    \For{$t=1$ \KwTo $T$}{
        \For{reversed($l=1$ \KwTo $L$)}{
            \tcp{Compute $e_l$ and $e_{l+1}$}
            \tcp{Update $\hat{h}_l$}
            }
    }
}
\caption{Sequential Inference \label{alg:2}}
\end{algorithm}
\end{minipage}

\begin{algorithm}[h]
\SetAlgoLined
\DontPrintSemicolon
\Begin{
    \tcp{Initialize}
    $\hat{h}_0 \leftarrow x^{(b)}$\;
    \For{$l=0$ \KwTo $L-1$}{
        $\hat{h}_{l+1} \leftarrow W_l f(\hat{h}_l)$
    }
    $\hat{h}_L \leftarrow y^{(b)}$\\
    \tcp{Inference Phase}
    \For{$t=0$ \KwTo T}{
        $e_{L} \leftarrow \hat{h}_L - \sigma(W_{L-1}f(\hat{h}_{L-1}))$\\
        \For{$l=1$ \KwTo $L-1$}{
            $e_{l} \leftarrow \hat{h}_l - W_{l-1}f(\hat{h}_{l-1})$
        }
        \For{$l=1$ \KwTo $L$}{
            $\hat{h}_{l} \leftarrow \hat{h}_{l} - \epsilon (e_l - f'(\hat{h_l})W_{l}^T e_{l+1})$
        }
    }
    \tcp{Update Weight Matrices}
        Equation \ref{eq:LMSUpdate}\;
}
\caption{IL ($\gamma_l = 1$,$\gamma^{decay}_l = 0$, $\sigma=$softmax or sigmoid)}\label{alg:ILFull}
\end{algorithm}
\begin{algorithm}[h]
\SetAlgoLined
\DontPrintSemicolon
\Begin{
    \tcp{Initialize}
    $\hat{h}_0 \leftarrow x^{(b)}$\;
    \For{$l=0$ \KwTo $L-1$}{
        $\hat{h}_{l+1} \leftarrow W_l f(\hat{h}_l)$
    }
    $\hat{h}_L \leftarrow y^{(b)}$\\
    \tcp{Inference Phase}
    \For{$t=0$ \KwTo T}{
        $p \leftarrow \sigma( W_{L-1}f(\hat{h}_{L-1}))$\\
        \For{reversed($l=1$ \KwTo $L$)}{
            $e_{l+1} \leftarrow \hat{h}_{l+1} - p$\\
            $p = W_{l-1}f(\hat{h}_{l-1})$\\
            $e_{l} \leftarrow \hat{h}_l - p$\\
            $\hat{h}_{l} \leftarrow \hat{h}_{l} - \epsilon (e_l - f'(\hat{h_l})W_{l}^T e_{l+1})$
        }
    }
    \tcp{Update Weight Matrices}
        Equation \ref{eq:LMSUpdate}\;
}
\caption{SeqIL ($\gamma_l = 1$,$\gamma^{decay}_l = 0$, $\sigma=$softmax or sigmoid))}\label{alg:SeqILFull}
\end{algorithm}
\textbf{Details on MQ Optimizer Implementation}\label{app:MQMethod} In our tests of the MQ optimizer, we use hyperparameter settings $\alpha_{min} = .001$, $r = .000001$, and $\rho = .9999$ for fully connected and $\rho =.999$ for convolutional networks. These hyper-parameter settings perform well generally across models, and using the same hyper-parameter settings across simulations significantly reduced hyper-parameter search. We only needed to then perform grid searches to find learning rates, $\alpha_l$. We find initializing $v_l = \alpha_l$ works well. The variable $v_l$, which controls the adaptive learning rate in MQ is a slow moving average of the weight update magnitude:
\begin{equation}
v_l^{(b+1)} = (1 - \rho) v_l^{(b)} - \rho \frac{1}{ij}\vert \frac{\partial F}{\partial W_{l}^{(b)}} \vert,
\end{equation}
where $\vert \frac{\partial F}{\partial W_{l}^{(b)}} \vert$ is the L-1 norm and i and j are the row and column sizes respectively. This moving average will be biased early in training toward the initial value of $v_l$, making $v_l$ and MQ highly sensitive to initialization. To prevent this bias we use a modified $\rho^*$ to more quickly update $v_l$ early in training:
\begin{equation}
\rho^* = \text{min}(\rho, 1/(b+2)),
\end{equation}
where $b$ is the training iteration. $v_l$ is then updated using $\rho^*$ instead of $\rho$. After a finite number of iterations $\rho^*=\rho$, but early in training $\rho^* > \rho$ allowing for faster updating of $v_l$. We find initializing $v_l = \alpha_l$ works well.

\textbf{Comparison of Standard and Sequential Inference (Figure \ref{fig:seqInfTest})} We measure the mean of the squared errors at each layer of a MLP with three hidden layers (3072-3x1024-10). ReLUs are used at hidden layers and softmax at output layer. The same step size of .02 is used to update layer activities in both models. Weights are randomly initialized so training cannot skew the results. CIFAR-10 dataset is used. Results are averaged over 10 seeds. For each seed we test on 10000 images. MSE is computed each inference iteration and averaged across images, then these averaged runs are averaged across seeds.

\textbf{Weight Update Analysis (Figure \ref{fig:wtAnlyze} and \ref{fig:wtAnlyzConv})} We train MLPs with three hidden layers, dimensions 3072-3x1024-10, on CIFAR-10. Five seeds are trained for each kind of algorithm. We train using standard IL (IL), standard IL with MQ (IL-MQ), and standard IL with Adam (IL-Adam). ReLU activations are used at hidden layers and softmax at the output layer. We train the network for 10000 iteration. After 1000 iterations we begin measuring the average update magnitude (we have this delay to allow the adaptive learning rates in Adam and MQ optimizers to 'warm up'). The average update magnitude is computed $\frac{1}{IJ}\sum_{i} \sum_j \vert \Delta W_{l, ij}\vert$, where $i$ and $j$ are the row and column indexes and $I$ and $J$ are the size of row and columns, respectively, of weight matrix $l$. The update magnitudes for each matrix are averaged over training iterations and seeds to obtain the final averages shown in figure \ref{fig:wtAnlyze}. This same procedure is used to compute the weight updates for figure \ref{fig:wtAnlyzConv}.

\textbf{Proximal Objective Tests (Figure \ref{fig:proxPart} and \ref{fig:proxFull})}\label{app:proxMethod}
Fully connected networks with 3 hidden layers and ReLU activations are first trained for 1000 iterations on CIFAR-10 then the proximal objective value is measured during inference over 10000 data points and averaged. These averaged runs are then averaged across 5 seeds to get the final values.

For the tests in figure \ref{fig:proxPart} and \ref{fig:proxFull}, we test the IL algorithms under the typical settings of $\gamma$ (rather than those in theorem 1), where $\gamma_n = 1$ for all $n$, and $\gamma_n^{decay} = 0$ for all $n$. We then measure the proximal objective during inference under the $\beta$ values $[.01, .1, 1, 10, 100]$. How the proximal objective changes during inference under each of these $\beta$ values is shown in figure \ref{fig:proxFull}. How it changes under the values $\beta=1$ is shown in figure \ref{fig:proxPart} (with the exception of IL-MQ which is shown with $\beta=100$, since this is the value it best minizes).

The proximal objective is measured each inference iteration, $t$, as follows: model parameters are updated each inference iteration using the IL learning rule and associated learning rate. The loss is then computed after the update (the $\mathcal{L}$ in the proximal objective) and the squared l-2 norm of the parameter update (the regularization term in the proximal objective) is computed. Parameter updates are then erased, another inference step is performed and the proximal objective value is recomputed.

\textbf{Classification with IL versus SeqIL (Figure \ref{fig:seqVsimAcc})}
We trained a four layer MLP, dimensions 3072-4x1024-10 with standard IL and sequential IL using Adam optimizers, since Adam has becme the standard optimizer used in previous works. Models are trained on CIFAR-10, with ReLU at hidden layers, softmax output layer.  Grid searches were used to find the learning rate and step size for activity updates, $\epsilon$, for each value of $T$. We searched over learning rates $[.0001, .00005, .00001]$ and step sizes for the activity updates $[.02, .05, .1, .3]$ for each value of $T$. After the grid search we trained 5 seeds at each value of $T$ and show the mean and standard deviation over the best accuracies achieved by each seed. 

\textbf{Classification and Autoencoder Training (Figures \ref{fig:seqVsimAcc}-\ref{fig:FC})}\label{app:methodTraining}
For classification tasks on SVHN, CIFAR-10, and Tiny Imagenet, we train fully connected MLPs dimension 3072-3x1024-10 on SVHN and CIFAR-10, and convolutional networks on SVHN, CIFAR-10, and Tiny Imagenet, with ReLU activitation at hidden layers and softmax at output layers. Due to limited compute, we used small convolutional networks, similar to previous work on biologically inspired algorithms \cite{bartunov2018assessing}, where max pools are removed and filters with stride of 2 are used to reduce the number of channels each layer. SVHN and CIFAR-10 use three convolutional layers (5x5-64, 5x5-128, 3x3-256) followed by one fully connected layer. Tiny imagenet used a network with four layers (5x5-32, 5x5-64, 5x5-128, 3x3-256) with one fully connected layer. All of the IL algorithms used a highly truncated inference phases, where T=3. In addition to sequential IL with MQ (SeqIL-MQ), we test SeqIL with no optimizer (SeqIL) and SeqIL with Adam (SeqIL-Adam). We compare to the performance of BP-SGD and BP with Adam (BP-Adam). Learning rates were first hand tuned to find ranges that yield good performance, then grid searches over 5-10 learning rates were performed to find the learning rate that the best test performance at convergence. The same approach for finding learning rates is used for the auto-encoder tasks. Autoencoders are fully connected networks with layers sizes 3072-1024-256-20-256-1024-3072, with ReLU at hidden layers and sigmoid at the output layer. 

\textbf{Real Time Training (Figure \ref{fig:RealTime})}
To get the training time in second for models trained on CIFAR-10, we trained a model of each type (FC classifier, convolutional classifier, and FC autoencoder) on CIFAR-10 without testing and without storing any values, so that testing and changes in available RAM from storing training data, do not affect training time. All models were trained on the same laptop with a small GPU type NVIDIA GeForce RTX 270 with MAXQ design. Built in python timer functions were used to get the training time for one training run for each model type. Training times were then plotted against the test losses, which were computed from the simulations described in the previous section.

\textbf{Determining Computation Requirements (Table \ref{tab:BPILComp})} Although matrix multiplies do not account for all of the operations performed, it is the most numerous operation in BP and IL algorithms, so it is a good indicator of computation cost for each algorithm. Each training iteration, BP must compute a feed forward signal, which requires L matrix multiples, where L is the number of matrices in the network. To propagate feedback signals, L-1 matrix multiplies are needed, and to perform weight updates another L matrix multiplies are needed, totaling 3L-1. IL algorithms similarly perform L matrix multiplies to compute the predictions (forward signals) and errors, and another L-1 to update activities, totaling 2L-1 matrix multiplies each inference iteration, totaling T(2L-1) for T inference iterations. Then another L matrix multiplies are needed to update weights, totaling T(2L-1) + L matrix multiplies.

\subsection{Supplementary Results}

\begin{figure}[h]
\includegraphics[width=\textwidth]{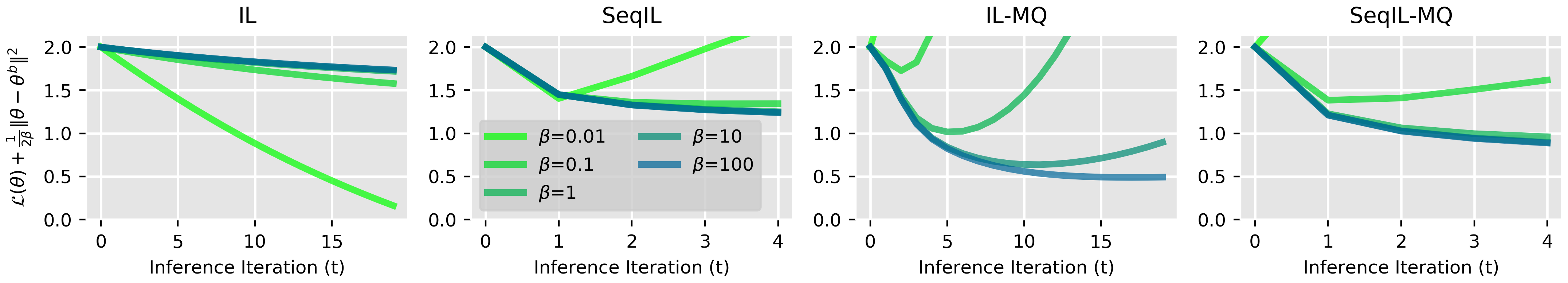}
\caption{Measurement of the proximal objective during the inference phase of IL in a fully connected networks trained on CIFAR-10. Algorithms are tested under settings $\gamma_l = 1$ and $\gamma_l^{decay} = 0$ for all $l$, which is the standard in previous work (e.g., \cite{rao1999predictive, whittington2017approximation, alonso2021tightening, salvatori2021associative, alonso2022theoretical}). We test IL with a fixed learning rate (IL), SeqIL with a fixed learning rate (SeqIL), and these algorithms with the MQ optimizer. The proximal objective is measured under different values of the $\beta$ term. For easier comparison each inference run is shifted so its starting value is 2. For a wide range of $\beta$ values the proximal objective is reduced during inference. \textit{According to theorem \ref{thrm:impSGD=IL}, performing inference with a fully clamped output layer, as we do here, approximates minimizing the proximal objective under a large $\beta$ value. All models significantly reduce the proximal objective for the largest $\beta$ value we tested of 100. Reduction of the proximal objective is less reliable for very small $\beta$ values. These results are therefore highly consistent with theorem \ref{thrm:impSGD=IL}}.}
\centering
\label{fig:proxFull}
\end{figure}
\begin{figure}[h]
\includegraphics[width=.47\textwidth]{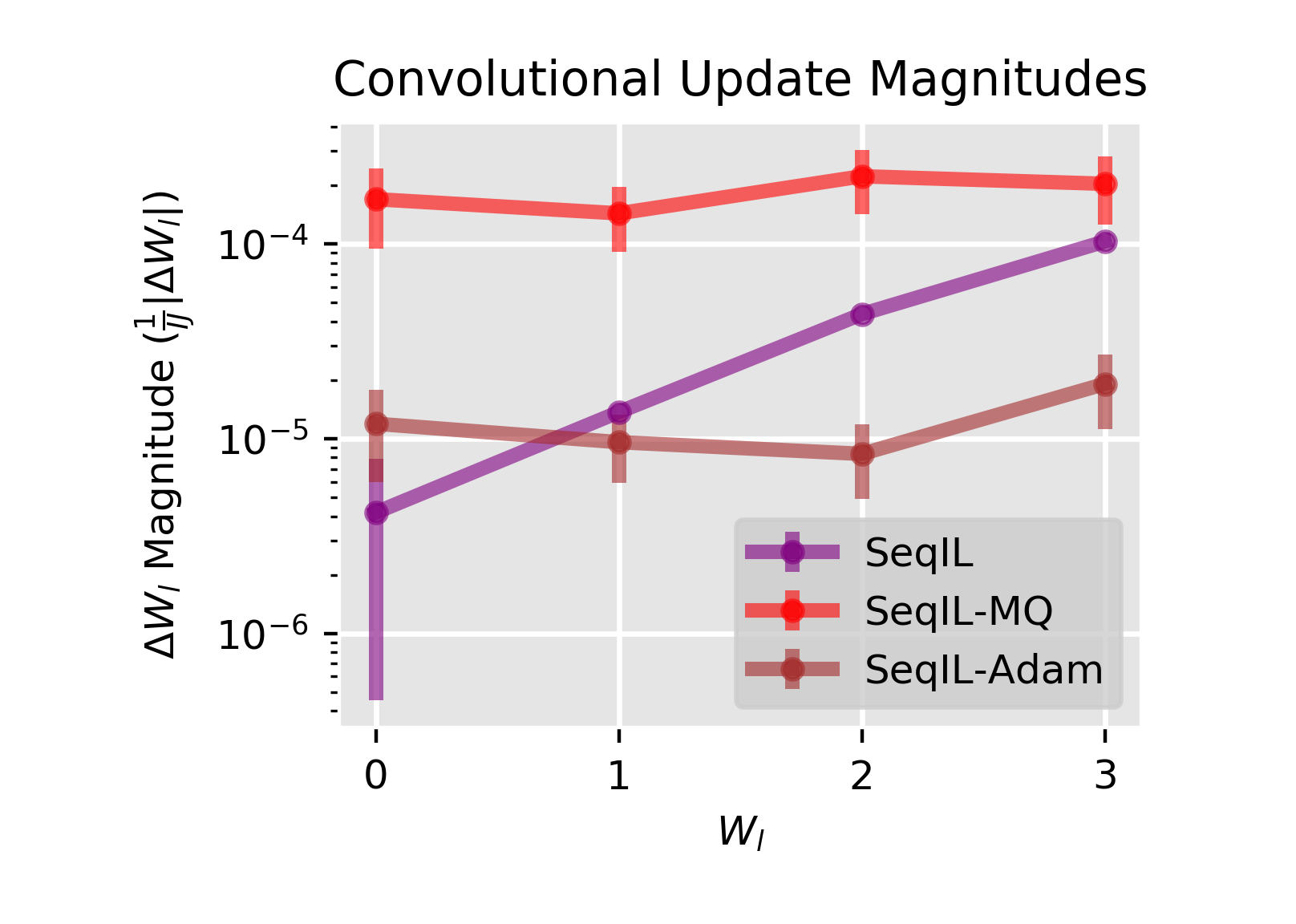}
\centering
\caption{Average update magnitude of each weight matrix/kernal in convolutional network with three hidden layers trained on CIFAR-10. The same network that was trained above was used: three convolutional layers (5x5-64, 5x5-128, 3x3-256), stride 2, followed by one fully connected layer, ReLUs at hidden layers, softmax output. The learning rates that achieved best test accuracy at convergence are used. In this convolutional network, we find that SeqIL-Adam equalized weight update magnitudes well, but tends to produce very small weight updates overall compared to SeqIL-MQ. This is different than the fully connected case, where the two produce very similar update magnitudes (figure \ref{fig:wtAnlyze}). Weight update magnitudes can be increased by increasing the learning rate parameter in the Adam optimizer, but we find learning quickly becomes unstable when we do this. The fact that weight updates are so small may help explain why SeqIL-Adam falls into shallow local minima, i.e., because it is unable to push parameters far enough from the initial values to find deep regions of the loss landscape, leading parameters to get caught in shallow local minima near the initial parameter values. SeqIL-MQ, on the other hand, equalizes weight update magnitudes but produces much larger updates on average which likely helps parameters find deeper minima. The reason SeqIL-MQ yields large weight updates likely has to do with the fact that it uses a single scalar learning rate for each matrix, which prevents any adaptive learning rates from becoming too small, whereas Adam seems to end up producing many very small, per-parameter learning rates.}\label{fig:wtAnlyzConv}
\end{figure}

\begin{figure}[b]
\includegraphics[width=\textwidth]{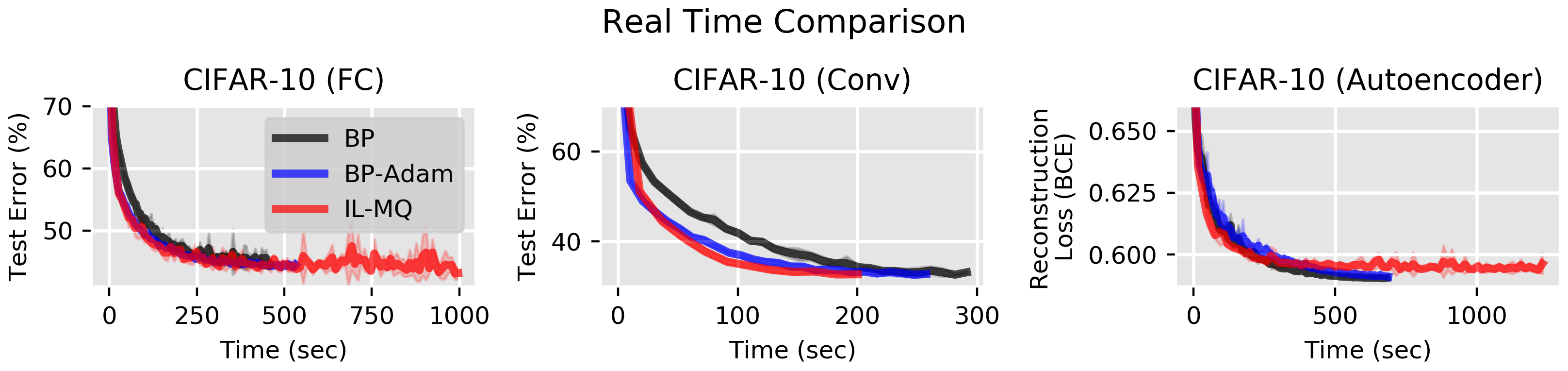}
\caption{MLP, convolutional network, and AE training runs shown for Cifar-10 with x-axis showing real time in seconds. Training runs shown to convergence or max number of epochs.}
\centering
\label{fig:RealTime}
\end{figure}

\begin{table}[h]
\parbox{.33\linewidth}{
\centering
\begin{tabular}{ccc}
\toprule
    \multicolumn{2}{c} {Memory Requirements}\\
    \toprule
    Algorithm & Memory \\
    \toprule
    BP & $\mathcal{O}(M)$ \\
    BP-Adam & $\mathcal{O}(3M)$ \\
    IL-MQ & $\mathcal{O}(M)$ \\
    IL-Adam & $\mathcal{O}(3M)$ \\
    \bottomrule
\end{tabular}
\caption{Memory computed as model parameters (M) + optimizer/algorithm parameters, in the limit where $M \rightarrow \infty$ with finite network layers.}\label{tab:MemComp}
}
\hfill
\parbox{.63\linewidth}{
\centering
\begin{tabular}{c c c c}
    \toprule
    \multicolumn{4}{c} {Computation Requirements}\\
    \toprule
    & BP & SeqIL & IL(T=15) \\
    \toprule
    MLP & $3L-1$ & $7L-1$ & $31L-1$\\
    AutoEnc. & $3L-1$ & $13L-1$ & $31L-1$\\
    \bottomrule
\end{tabular}
\caption{The number of matrix multiplies performed each training iteration. L is the number weight matrices. SeqIL computation is computed given the value T used in our simulations above. We compare this to standard IL that uses a more common value of T (15) (see app. \ref{app:methods} for details).}\label{tab:BPILComp}
}
\end{table}

\begin{table}[h]
\centering
\begin{tabular}{c c}
    \toprule
    \multicolumn{2}{c} {Computation Requirements}\\
    \toprule
    IL/SeqIL & iPC \\
    \toprule
    $T (2L-1) + L$ & $T(2L-1) + LT$\\
    \bottomrule
\end{tabular}
\caption{The number of matrix multiplications per training iteration, where L is the number weight matrices and T the number of inference iterations. Each inference iteration, 2L-1 matrix multiplies are needed to update each $\hat{h}_l$. With T inference iterations, this means T(2L-1) matrix multiplies are needed to perform inference. Since IL and SeqIL only update weights once per training iteration, another L matrix multiples are needed to compute weight updates for each weight matrix, resulting in a total of T(2L-1) + L matrix multiplies. Incremental PC (iPC) \cite{salvatori2022incremental} updates all weights each inference iteration requiring TL matrix multiplies, for a total of T(2L-1) + TL. Note, however, that \cite{salvatori2022incremental} 
 et al., point out that iPC typically requires fewer matrix multiplies to perform on non-zero update on each matrix, since weights are updated each iteration. This may be a useful property for scenarios where, e.g. the input is presented in close to real time and weight must be updated rapidly.}\label{tab:iPCIL}
\end{table}

\end{document}